\theoremstyle{plain}
\newtheorem{theorem}{Theorem}[section]
\newtheorem{proposition}[theorem]{Proposition}
\newtheorem{lemma}[theorem]{Lemma}
\newtheorem{corollary}[theorem]{Corollary}
\theoremstyle{definition}
\newtheorem{definition}[theorem]{Definition}
\newtheorem{assumption}[theorem]{Assumption}
\theoremstyle{remark}
\newtheorem{remark}[theorem]{Remark}
\icmltitlerunning{Out-of-Distribution Detection via Deep Multi-Comprehension Ensemble}
\begin{document}

\twocolumn[
\icmltitle{Out-of-Distribution Detection via Deep Multi-Comprehension Ensemble}



\icmlsetsymbol{equal}{*}

\begin{icmlauthorlist}
\icmlauthor{Chenhui Xu}{gmu}
\icmlauthor{Fuxun Yu}{gmu}
\icmlauthor{Zirui Xu}{gmu}
\icmlauthor{Nathan Inkawhich}{afrl}
\icmlauthor{Xiang Chen}{gmu}
\end{icmlauthorlist}

\icmlaffiliation{gmu}{George Mason University}
\icmlaffiliation{afrl}{Air Force Research Laboratory}

\icmlcorrespondingauthor{Xiang Chen}{xchen26@gmu.edu}
\icmlcorrespondingauthor{First Author: Chenhui Xu}{cxu21@gmu.edu}

\icmlkeywords{Machine Learning, ICML}

\vskip 0.3in
]



\printAffiliationsAndNotice{}  

\begin{abstract}

    Recent research works demonstrate that one of the significant factors for the model Out-of-Distribution detection performance is the scale of the OOD feature representation field.   
        Consequently, model ensemble emerges as a trending method to expand this feature representation field leveraging expected model diversity.
However, by proposing novel qualitative and quantitative model ensemble evaluation methods~(i.e., Loss Basin/Barrier Visualization and Self-Coupling Index), we reveal that the previous ensemble methods incorporate affine-transformable weights with limited variability and fail to provide desired feature representation diversity.
    Therefore, we escalate the traditional model ensemble dimensions (different weight initialization, data holdout, etc.) into distinct supervision tasks, which we name as Multi-Comprehension (MC) Ensemble. 
        MC Ensemble leverages various training tasks to form different comprehensions of the data and labels, resulting in the extension of the feature representation field.
In experiments, we demonstrate the superior performance of the MC Ensemble strategy in the OOD detection task compared to both the naive Deep Ensemble method and the standalone model of comparable size.
\end{abstract}
\section{Introduction}
State-of-the-art neural network models often exhibit overconfidence in their predictions due to their training and generalization in a static and closed environment. 
    Specifically, these models assume that the distribution of test samples is identical to that of the training samples. 
        However, this assumption may not hold in the open world, as out-of-distribution (OOD) samples can arise from unreliable data sources or adversarial attacks. 
            Such OOD samples can introduce significant challenges to the generalization performance of these models.
Due to reliability and safety concerns, it is crucial to identify when input data is OOD. 
    Significant research efforts have been devoted to detecting OOD samples~\cite{liang2018enhancing,hendrycks2019using,ren2019likelihood,huang2021importance}, as well as the estimation of uncertainty~\cite{lakshminarayanan2017simple} in neural network models.

In practice, researchers proposed combining multiple independent models to enhance the robustness of model predictions against OOD samples~\cite{lakshminarayanan2017simple,zaidi2021neural,Malinin2020Ensemble,kariyappa2021protecting,li2021kfolden,xue2022boosting}. 
Inspired by the Bagging~\cite{breiman1996bagging}, one of the most representative works --- Deep Ensembles~\cite{lakshminarayanan2017simple} was proposed, which calculates the average of the posterior probabilities generated by multiple models with different initializations. 
    This approach delivers an ensemble model that is more pervasive and scalable for OOD detection. 


\begin{table*}
  \caption{Ensemble }
  \centering    
  \tabcolsep 6pt
  \resizebox{\textwidth}{!}{
  \begin{tabular}{lccc}
    \toprule
\textit{Method} & \textit{Diversity Approach} & \textit{Training Criterion} & \textit{Comprehension}\\
    \midrule
Deep Ensemble~\cite{lakshminarayanan2017simple} & Weight Initialization & Cross-Entropy~(CE) & Data-Label Pairs\\
SSLC~\cite{vyas2018out} & Data Leave-Out & Margin-Entropy & Data-Distribution Pairs\\
kFolden~\cite{li2021kfolden} & Data Leave-out & Cross-Entropy & Data-Label Pairs\\
EnD$^2$~\cite{Malinin2020Ensemble} & Weight Initialization & Cross-Entropy & Data-Label Pairs\\
LaCL~\cite{cho2022enhancing} & Weight Initialization & Supervised Contrastive & Bring similar data close\\
\textbf{MC Ensemble}~(ours)& Training Task & CE+SimCLR+SupCon &\textbf{Multi-Comprehension} \\

    \bottomrule
  \end{tabular}
  }
  \label{tab:comprehension}
\end{table*}
            
However, recent work~\cite{abe2022deep} claims that the ensemble diversity does not meaningfully contribute to a Deep Ensemble's OOD detection performance improvement. 
    This means that Deep Ensemble's performance is consistent with that of a single model of equivalent size.
        We observe this phenomenon and attribute it to the fact that the diversity provided by naive Deep Ensemble through different model initializations is not significant enough.
     Specifically, the individuals in a naive ensemble tend to exhibit a considerable lack of diversity in feature representation.
        
This is because, although the individual models of the deep ensemble seem different due to diverse initializations and partial datasets, they still adopt the same training criterion, forming a monotonic comprehension.
    As shown in Table~\ref{tab:comprehension}, for example, models trained with cross-entropy loss always try to directly find the mapping from data to labels. 
        The formation of this single comprehension is accompanied by intrinsic mode connectivity~\cite{pagliardini2022agree,frankle2020linear,ainsworth2023git} among neural networks. 
            With only a single training criterion, individual models in an ensemble are usually mode-connected and thus are not sufficient to generate a diversity of feature representations that can boost the OOD detector.
            
As diversity is the key to the model ensemble, we propose a new perspective to measure it regarding the distribution distance between feature representations. Notably, we illustrate how different training tasks can give diverse feature representations to the models in terms of the loss landscape.
    Based on feature representation and loss landscape perspective findings and assumptions, we demonstrate that the training task is a crucial factor in the diversity of the models. 
Therefore, we devised a novel ensemble scheme, named Multi-Comprehension Ensemble~(MC Ensemble) that integrates models trained on different tasks but with the same structure and training data. 
    Our ensemble breaks away from the original ensemble approach in the dimensionality of single comprehension patterns. 
        We bring a new dimension to the consideration of ensemble diversity: the\textbf{ comprehension mode} of models.
    Our experiments show that this ensemble scheme outperforms other ensemble approaches like different initialization and data leave-out on CIFAR10 and Imagenet Benchmarks.

\textbf{Contributions.} We make the following contributions:
\begin{itemize}
\vspace{-3mm}
    \item We demonstrate the feasibility of feature-level ensemble in OOD detection in principle. (Section \ref{S2})
    
\vspace{-2mm}
    \item We reveal that the previous ensembles' inability to effectively detect OOD samples can be attributed to the insufficient level of diversity among models trained using the same criterion. (Section \ref{S3})

    \vspace{-2mm}
    \item We propose a novel method, Self-Coupling Index, to quantitatively measure the difference between feature representations generated by two models. (Section \ref{S3})
        \vspace{-2mm}
    \item We reveal that multiple training criteria introduced by different supervision tasks can make the loss barrier between models larger through the perspective of the loss landscape, thus enabling diverse penultimate-layer feature representations, and eventually, forming a diverse Multi-Comprehension mode.  (Section \ref{S4})
        \vspace{-2mm}
    \item We propose a feature-level ensemble scheme that exploits the diversity of models based on distinct comprehension, resulting a model powerful in OOD detection.
\end{itemize}


\section{How Feature Ensembles Boost OOD Detection?}
\label{S2}

Let $\mathcal{X}$ and $\mathcal{Y}$ be the input space and label space. 
    We define the penultimate layer representation space of the neural network as $\mathcal{R}$. 
Then the neural network trained on hypothesis $H$ can be represented as $f_H(x) = h_H(g_H(x)), x\in\mathcal{X} $, where $g_H: \mathcal{X} \to \mathcal{R}$ is the feature encoder and $h_H: \mathcal{R} \to \mathcal{Y}$ is the projection head. 
    The hypothesis $H$ contains the training criterion (i.e. Cross-entropy loss, SimCLR~\cite{chen2020simple}, SupCon loss~\cite{khosla2020supervised}), data distribution $D$, initialization $\Theta$, and other training configuration.

\begin{figure*}[t]
    \centering
    \includegraphics{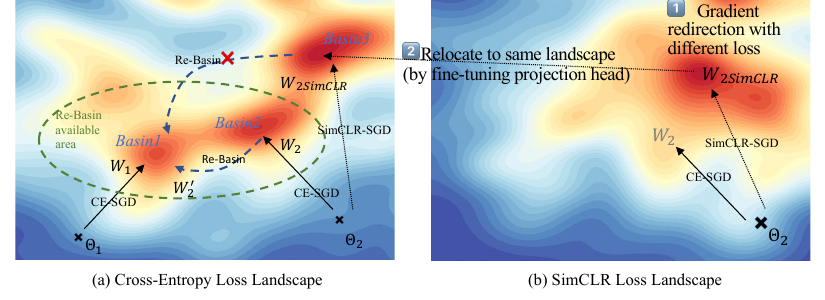}
    \caption{(a)~Models trained by \textbf{different initialization} ($\Theta_1$, $\Theta_2$) but with the same cross-entropy classification task (\textit{CE-SGD}) can fall into the same or symmetric loss basin, which can be affine-transformed into the same basin (\textit{Re-Basin}~\cite{ainsworth2023git}). This indicates the two models provide little variability. (b)~By contrast, a \textbf{different comprehension task} (\textit{SimCLR-SGD}) directs the model parameters in other directions. When \textit{SimCLR-SGD} weights are relocated to the same loss landscape of \textit{CE-SGD} weights, we can observe the loss barrier between two sets of weights is high so that \textit{Re-Basin} is not possible, thus increasing the model and feature diversity. }
    \label{fig:rebasin}
\end{figure*}  
\subsection{What Is a Good OOD Detection Booster?}


    A simple and common OOD detection method is to score the feature representation $z = g_H(x) = (z_1,z_2,\dots,z_m) \in \mathcal{R}$ in the penultimate layer space based on the scoring metrics $s(z) \in \mathbb{R}$~\cite{liu2020energy,sun2022out,liang2018enhancing}. 
        Then we determine the sample by its score and a threshold $\tau$ that the sample is ID if $s(z)>\tau$ and vice versa OOD. 
    Previous work has proved that the separation of ID and OOD in feature space can be transferred into OOD detector's output space
    \cite{sun2021react}, indicating that a good OOD detection booster should make ID and OOD activation more separable. 
        Considering the nature difference the distribution between ID~(following a rectified normal distribution, $z_i \sim \mathcal{N}^R(\mu,\sigma^2)$) and OOD activation~(following a rectified epslion-skew normal distribution,$z_i \sim ESN^R(\mu,\sigma^2,\epsilon)$)~\cite{sun2021react}, to make two distributions more separable, we should have: 
        
\textbf{(1) ID data should achieve greater positive activation movement (increase) compared to OOD data in average.} We denote $\bar{z_i}$ as the activation after applying an OOD detection booster. Then:
\begin{equation}
    \mathbb{E}_{out}[\Bar{z_i}-z_i]-\mathbb{E}_{in}[\Bar{z_i}-z_i] \leq 0
    \label{eact}
\end{equation}
\textbf{(2) Activation after boosting should form a better estimate of the parameter $\mu$.} In other words, the variance of the estimate should be smaller than the not-boosted one,
\begin{equation}
    Var(\bar{\hat{\mu}}) \leq  Var({\hat{\mu}}).
    \label{evar}    \vspace{-1mm}
\end{equation}

\subsection{Feature-level Ensemble Is a Good OOD Booster}
The traditional ensemble strategy is based on the bias-variance decomposition theory~(see Appendix \ref{A1}). However, this theory ignores the ensemble's effect on feature representation, and thus in principle fails to explain ensemble-based OOD detection in feature level. We first demonstrate that individuals in feature-level ensemble will not counteract each other, in Appendix~\ref{A2.1}. Then, we analyze the feature-level ensemble from the above two conditions.

Under the premise of using neural networks with the same architecture, we assert that the pre-activation features of a single dimension in different models follow the same distribution due to the presence of normalization layer. For ID data, compared with a single model, the average movement of feature averaging ensemble for activation is:
\begin{multline}
     \mathbb{E}_{\text{in}}[\bar{z_i}-z_i] = \mu\left[\Phi\left(\frac{\mu\sqrt{M}}{\sigma_{\text{in}}}\right)-\Phi\left(\frac{-\mu}{\sigma_{\text{in}}}\right)\right]+\\\sigma_{\text{in}}\left[\frac{1}{\sqrt{M}}\phi\left(\frac{\mu\sqrt{M}}{\sigma_{\text{in}}}\right)-\phi\left(\frac{-\mu}{\sigma_{\text{in}}}\right)\right].
\end{multline}

For OOD data, the corresponding movement will be:
\begin{align}
    &\mathbb{E}_{\text {out}}\left[\bar{z_i}-z_i\right] = \frac{4\epsilon\sigma_{\text{out}}}{\sqrt{2\pi}}\left(1-\frac{1}{\sqrt{M}}\right) \\ &+(1+\epsilon)\mu\left[\Phi\left(\frac{\mu\sqrt{M}}{(1+\epsilon)\sigma_{\text{out}}}\right)-\Phi\left(\frac{\mu}{(1+\epsilon)\sigma_{\text{out}}}\right)\right]+\notag\\&(1+\epsilon)^2\sigma_{\text{out}}\left[\frac{1}{\sqrt{M}}\phi\left(\frac{\mu\sqrt{M}}{(1+\epsilon)\sigma_{\text{out}}}\right)-\phi\left(\frac{\mu}{(1+\epsilon)\sigma_{\text{out}}}\right)\right]. \notag
\end{align}
Under the same chaotic-level~($\sigma_{\text{in}}=\sigma_{\text{out}}$), the activation movements satisfy Eq.~(\ref{eact}), that is ID activations move more. See Appendix \ref{A2} for detailed proof.

Meanwhile, the essence of the process of selecting a certain number of models from a model pool to construct a feature average ensemble is sampling. The construction process is done according to some rules (we artificially design the content models that makes up ensemble), which is consistent with the characteristics of cluster sampling with small cluster~\cite{angrist2009mostly}. Therefore, the variance estimate of the parameter $\mu$ with averaged feature will be:
\begin{equation}
    Var(\bar{\hat{\mu}}) = \frac{1+(M-1)\rho}{M}Var({\hat{\mu}}) \leq Var({\hat{\mu}}),
    \label{eqvar}
\end{equation}
where $\rho \in [0,1]$ denotes intraclass correlation coefficient. 
    
Therefore, satisfying Eq.~(\ref{eact}) and
(\ref{evar}), we conclude feature-level ensemble is a good OOD detection booster.

\subsection{Feature Diversity Matters in Ensemble}
 Intraclass correlation directly reflects the diversity of individual models in ensemble.
    Due to the same training data, network architecture, or training criteria, feature representations in ensemble fails to be independent, leading to a non-zero intraclass correlation. 
 Eq.~(\ref{eqvar}) reveals that with a smaller intraclass correlation, the ensemble will be stronger to separate ID and OOD.
        However, direct measurement of intraclass correlation fails to reveal the model difference, because the intraclass correlation is restricted to low-dimensional statistics while the dimension to compare two models' representation at least in the order of millions (\# of samples $\times$ feature dimensions). 
            This requires us to reconsider how we measure the diversity of models.

\section{How Much Diversity Exists among Models?}
\label{S3}
\subsection{Diversity: Mode Connectivity and Feature View}

\begin{figure*}[t]
    \centering
    \includegraphics{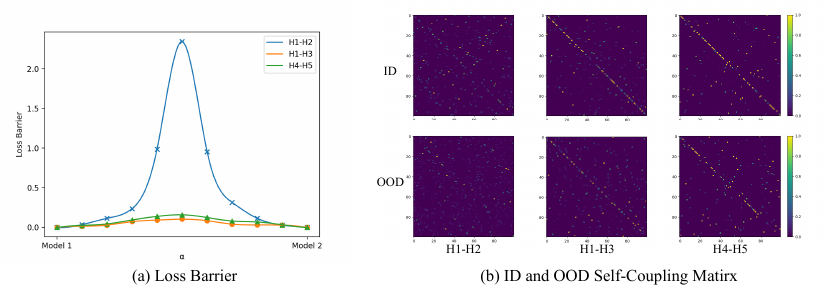}
    \caption{Two models trained from different hypotheses. When there is a large loss barrier between models, the coupling matrix of features tends to perform more stochastic. The models' architecture is ResNet-18~\cite{he2016deep}. H1: Initialization $\Theta_1$, Cross-Entropy loss, whole training set. H2: Initialization $\Theta_2$, SimCLR Loss, whole training set. H3: Initialization $\Theta_2$, Cross-Entropy Loss, whole training set. H4: Initialization $\Theta_1$, Cross-Entropy Loss, 80\% training set. H5: Initialization $\Theta_2$, Cross-Entropy Loss, another 80\% training set.}
    \label{fig:barrier}
\end{figure*}

\label{pare:3}
Since the post-hoc scoring metric in OOD detection employs penultimate layer feature maps, the penultimate layer feature diversity will be important in ensemble-based approaches. 
Deep Ensembles~\cite{lee2018simple,fort2019deep} claim the diversity via randomness of SGD coupled with non-convex loss surfaces. 
    While other data-based ensembles (i.e. K-fold~\cite{li2021kfolden} and Bagging~\cite{breiman1996bagging}) use the differences in training data from different individual models to construct diversity. 
However, in this section, from a unified loss landscape and feature representation distribution perspective, we reveal that the difference between the individuals among ensembles with such diversify strategies may not be as significant as expected, because with same data distribution and optimizer, different individual models' optima can be connected or aligned~\cite{tatro2020optimizing} with a easy permutation on either weights or features.

\textbf{Conjecture 1} (Feature Transformation Alignment). If there is linear mode connectivity between the two models, then based on Optimal Transport theory, both ID and OOD samples' penultimate layer feature maps generated by the models can be aligned by an affine transformation with a very small number of training sample features calibrated.

If the individual models in the ensemble fall into the same or perturbed-symmetric loss landscape basin after the SGD optimization, then these individual models perform similarly in the penultimate layer feature representation.
            This similarity of the features fails to provide much representation diversity in an emsemble, therefore, leads to limited improvement in OOD detection performance.

\subsection{Measuring Mode Connectivity with Loss Barrier}
\label{para:3.2}
Git Re-basin~\cite{ainsworth2023git} gives a view that two models trained with SGD can be trapped in a permeated-symmetric basin and their behavior is similar.
    Given such two models, their parameters can be calibrated after a simple affine permutation.
        As shown in Fig.~\ref{fig:rebasin}~(a), if we simply train the models from two different initializations, they can easily end up in the same or symmetric loss basin since their objectives are the same. When training with different subsets, the loss landscape is not very different because the samples still obey the assumption of independent identical distribution. 
    Following Git Re-basin, we also try to apply the same perturbation (STE matching~\cite{ainsworth2023git}) to models trained on different hypotheses to find whether there is linear mode connectivity between the two models.
        We calculate the loss barrier of the model after the perturbation, which has the following definition~\cite{frankle2020linear}. To make the loss uniform, we define the loss function for the current parameters on the target task of the model rather than on the pre-training task.
\begin{equation}
 \resizebox{0.91\hsize}{!}{$
\begin{split}
    LossBar(\Theta_A,\Theta_B) = &max_{\alpha \in [0,1]} \mathcal{L}((1-\alpha) \Theta_A + \alpha \Theta_B)\\&-\frac{1}{2}(\mathcal{L}(\Theta_A)+\mathcal{L}(\Theta_B)),
\end{split}$}
\end{equation}
where $\Theta_A,\Theta_B$ are trained parameters, and $\mathcal{L}(\cdot)$ is loss function on the target task. Related concept is in Appendix~\ref{A4.1.1}.

\begin{table}
\vspace{-3mm}
  \caption{Self-Coupling Index and Loss Barrier for models trained under different initialization and training strategies. The model structure is ResNet18. The loss is measured on CIFAR10. }
  \label{sample-table}
    \resizebox{3.25in}{!}{
  \centering
  \begin{tabular}{lcccccc}
    \toprule
   & \multicolumn{2}{c}{SupCE}   & \multicolumn{2}{c}{SupCon}              & \multicolumn{2}{c}{SimCLR}   \\
    \cmidrule(r){2-3}  \cmidrule(r){4-5}  \cmidrule(r){6-7} 
         & SCI     & Loss Barrier & SCI     & Loss Barrier & SCI        & Loss Barrier  \\
    \midrule
    SupCE & \textbf{0.861}  & \textbf{0.1047}   & 0.203 & 2.1179 &0.091 & 2.3557 \\
    SupCon  & 0.214 & 2.1495  & \textbf{0.877} & \textbf{0.0993} & 0.107 & 2.3456   \\
    SimCLR & 0.094   & 2.3447  & 0.089 & 2.3155 & \textbf{0.834} &\textbf{0.1579} \\
    \bottomrule
  \end{tabular}}
  \label{tab:diff}
  \vspace{-4mm}
\end{table}

\subsection{Measuring Feature Differences with Self-Coupling}

To test Conjecture 1 from the feature representation level, we denote the penultimate layer features of the two models based on Hypothesis $H_1$ and $H_2$ for sample $x_i$ as $g_{H_1}(x_i)$ and $g_{H_2}(x_i)$, respectively. Taking a constant $N$ such that $|\mathcal{Y}| < N <dim\ll |\mathcal{X}|$, we randomly select $N$ samples from the training set, denoted as $x_{(1)},x_{(2)},\cdots,x_{(N)}$. 
    Then, according to these N samples, a linear transformation matrix $\mathbf{A} \in \mathbb{R}^{dim\times dim}$ and a deviation vector $\vec b \in \mathbb{R}^{dim}$ are calculated such that $g_{H_1}(x_{(i)}) = \mathbf Ag_{H_2}(x_{(i)}) +\vec b,\ i = 1\cdots N$.  
        For all test samples (ID and OOD), we generate $g_{H_2}(x_j)$'s counterpart representation $g_{H_2}'(x_j) = \mathbf{A} g_{H_2}(x_j) + \vec b, \ j \in D_{ID}\cup D_{OOD}$. 
            If Conjecture 1 holds, then $g_{H_1}(x_i)$ will have a high probability of corresponding to $g_{H_2}'(x_i)$ with respect to index $i$ when we establish the Optimal Transport between the distributions of the penultimate layer features $g_{H_1}(D)$ and $g_{H_2}'(D)$ of the test sample set $D = D_{ID}\, \text{or}\, D_{OOD}$.

Optimal Transport outputs a deterministic mapping for any pair of continuous distributions where the mass of distribution $g_{H_1}(\mathbf{x})$ is pushed forward to another distribution $g_{H_2}'(\mathbf{x})$. 
    For a given sample set $D$, we define this mapping by the Sinkhorn distance~\cite{cuturi2013sinkhorn} as a coupling matrix $\mathbf{P}_{H_1,H_2}$, which describes how much probability mass from one point in support of $g_{H_1}(D)$ is assigned to a point in support of $g_{H_2}'(D)$. The calculation and constraint of the coupling matrix are shown in Appendix \ref{A3}.
        The diagonal of the coupling matrix $\mathbf{P}_{H_1,H_2}$ represents the sample's own-to-self assignment.      The diagonal highlighting of the coupling matrix indicates that the difference in the feature representation of the two models is small for any sample. 
            Hence, we define a Self-Coupling Index between two models which indicates the degree of consistency in the feature representations of the models. 

\textbf{Definition 1} (Self-Coupling Index) Given two models trained on the hypotheses $H_1$ and $H_2$, the self-coupling index $\mathcal{C}_{H_1,H_2} \in [0,1]$ between the models is defined as:
\begin{equation}
    \mathcal{C}_{H_1,H_2} = \frac{|\mathcal{X}|}{k}tr(\mathbf{P}_{H_1,H_2,top_k}).
\end{equation}

As shown in Fig.~\ref{fig:barrier}(b) H1-H3, we train two ResNet~18~\cite{he2016deep} models with different initialization $\Theta_1$ and $\Theta_2$, we observe highlighting on the diagonal of the coupling matrix, and the self-coupling index $\mathcal{C}_{\Theta_1,\Theta_2} = 0.853$, indicating the difference in penultimate layer feature is minimal.





\subsection{Different Initializations and Dataset Partition Provide Limited Diversity}
If a low loss barrier can be generated between models by perturbing the weights, the result is that the feature representations generated by these models can also be aligned by a simple transformation.
As shown in Fig.~\ref{fig:barrier}, we trained multiple models based on different hypotheses, and we found a significant correlation between linear mode connectivity and coupling matrix. When the loss barrier is large, we find that the corresponding two models generate both ID and OOD features with a more confusing coupling matrix, implying that the difference between the features generated by the models is significant.

The model pairs trained on different sets of hypotheses differ significantly in terms of the loss barrier.
    In Fig.~\ref{fig:barrier}, the two models trained based on hypotheses H1 and H3 demonstrate that, in agreement with mode connectivity theory, the differences introduced by the different initializations are easily eliminated, i.e., the variability they provide is very small.
        Surprisingly, the two models trained based on hypotheses H4 and H5 use different initializations and two independently sampled subsets of the training set, but both their ID and OOD features are also still highly self-coupled.
            Thus, different model initialization and data partitioning fail to provide sufficient feature representation diversity.

\begin{table*}
  \caption{\textbf{Results on CIFAR10 Benchmark.} Comparison with competitive OOD detection methods. All results are in percentages. Some of the baseline results are from~\cite{sun2022out}.}
  \resizebox{\textwidth}{!}{
  \centering    
  \tabcolsep 1pt
  \begin{tabular}{lcccccccccccc}
    \toprule

     \multicolumn{13}{c}{\textbf{OOD Dataset}}\\
   & \multicolumn{2}{c}{\textbf{SVHN}}   & \multicolumn{2}{c}{\textbf{LSUN}}              & \multicolumn{2}{c}{\textbf{iSUN}}      & \multicolumn{2}{c}{\textbf{Texture}}     & \multicolumn{2}{c}{\textbf{Places365}}     & \multicolumn{2}{c}{\textbf{Average}}  \\
    \cmidrule(r){2-3}  \cmidrule(r){4-5}  \cmidrule(r){6-7}  \cmidrule(r){8-9}  \cmidrule(r){10-11}  \cmidrule(r){12-13}
      Methods   & FPR95     & AUROC & FPR95     & AUROC & FPR95     & AUROC       & FPR95     & AUROC & FPR95     & AUROC & FPR95     & AUROC \\
    \midrule
    ODIN~\cite{liang2018enhancing}  & 20.93 &95.55 &7.26 &98.53 &33.17 &94.65 &56.40 &86.21& 63.04 &86.57 &36.16 &92.30\\ 
    SSD+~\cite{sehwag2021ssd} &1.51 &99.68 &6.09 &98.48& 33.60 &95.16 &12.98 &97.70 &28.41 &94.72 &16.52 &97.15 \\
    CSI~\cite{tack2020csi} &37.38 &94.69 &5.88 &98.86 &10.36 &98.01 &28.85 &94.87 &38.31 &93.04 &24.16 &95.89\\
    MSP~\cite{hendrycks2017baseline} &59.66 &91.25 &45.21 &93.80 &54.57 &92.12 &66.45 &88.50 &62.46 &88.64 &57.67 &90.86\\
    Mahalanobis~\cite{lee2018simple} &9.24 &97.80 &67.73 &73.61 &\textbf{6.02} &\textbf{98.63} &23.21 &92.91 &83.50 &69.56 &37.94 &86.50\\
    Energy~\cite{liu2020energy} &54.41 &91.22 &10.19 &98.05 &27.52 &95.59 &55.23 &89.37 &42.77 &91.02 &38.02 &93.05\\
    KNN~\cite{sun2022out} &24.53 &95.69 &25.29 &95.96 &25.55 &95.26 &27.57 &94.71 &50.90 &89.14 &30.77 &94.15\\
    KNN+\cite{sun2022out} &2.42 &99.52 &1.78 &99.48 &20.06 &96.74 &8.09 &98.56 &23.02 &95.36 &11.07 &97.93\\
    \midrule
    \textbf{MC Ens.+MSP} & 37.49 & 92.22 & 33.96 & 94.96 & 43.96 & 92.21 & 43.68 &92.43 & 39.68 & 90.15 & 39.75 & 92.39\\
    \textbf{MC Ens.+Mahala.} &2.09 & 99.48 & 43.35 & 93.79 & 21.59&94.77&14.31& 94.68 & 27.68 & 89.88 & 21.80 & 94.52\\
    \textbf{MC Ens.+Energy} &34.99 & 92.58 & 6.05& 99.05& 17.96 &96.59 &23.97 &91.92 & 33.02 & 92.37 & 23.20 & 94.50\\
    \textbf{MC Ens.+KNN} & \textbf{1.35} & \textbf{99.70} & \textbf{1.45} & \textbf{99.80} & 7.88 &98.09 & \textbf{4.07} & \textbf{99.05} & \textbf{13.19} & \textbf{97.01} & \textbf{5.58} & \textbf{98.73}\\
    \bottomrule
  \end{tabular}}
  \label{tab:cifar}
\vspace{-4mm}
\end{table*}

\vspace{3mm}
\section{Improving Diversity with Multi-Comprehension Ensemble}
\label{S4}

\subsection{Exploring Multi-Comprehension via Training Tasks}

   

\textbf{Conjecture 2} (Multi-Comprehension) Using distinct pre-training tasks helps to improve the loss barrier between the models and thus helps to improve the ensemble diversity.
    
Conjecture 2 means different comprehension is developed through different training tasks. 
        As shown in Table~\ref{tab:comprehension}, different training criteria ~(tasks) are corresponding to different comprehensions to input data.     
            This is because when designing different training criteria, the corresponding objectives are different so that the trained individual models will have a different comprehension of the inputs, which is difficult to translate into each other by simple perturbations at the parameter or feature representation level. 
                Thus, the diversity provided by multiple training tasks is much more significant.

    Based on the analysis in Section~\ref{para:3.2}, enlarging the loss barrier between models is one of the keys to feature representation diversity.
     An intuitive way to enlarge the loss barrier is to train the individuals using different training tasks, i.e., train the weights using different losses. 
    To generate a loss barrier by having the parameters arrive in completely different basins after training, we can train the model on a completely different loss landscape defined by the loss function. 
        As shown in Fig.~\ref{fig:rebasin}~(b), we can use other training criteria to make the parameters go in the other direction during the stochastic gradient descent, which finally fall into the symmetric unreachable basin and produce a total different feature representation with the original task training.

We find that models trained with different tasks are more likely to have a smaller Self-Coupling Index.
        As shown in Table~\ref{tab:diff}, we verified the Self-Coupling Index between a fraction of three commonly used training criteria on ResNet-18, and their loss barrier on the CIFAR10 classification task. More Self-Coupling Index on different models, datasets, and training tasks can be found in Appendix \ref{A4}.

\subsection{Building Multi-Comprehension Ensemble}

Given a candidate pool with $N$ individual model hypotheses $\mathbb{H} = \{H_1, \cdots, H_N \}$, we select $M$ of them to form an ensemble. We call this ensemble with individual models trained on different hypotheses a Multi-Comprehension Ensemble~(MC Ensemble).

\textbf{Self-Coupling Index guided model selection:} When selecting individual models in an ensemble, we need to consider both the performance of ID samples and the feature diversity. We use the loss of the model on the ID dataset to measure its ID performance and the Self-Coupling Index (as in Table~\ref{tab:diff},~\ref{tab:sci-res18-cifar},~\ref{tab:sci-res50-cifar},~\ref{tab:sci-res50-im}, and~\ref{tab:sci-vit}) to measure feature diversity. Thereby, the problem of constructing an ensemble can be transformed into the following minimization problem:
\begin{equation} 
\resizebox{0.91\hsize}{!}{$
    \min_{H_1,\cdots,H_M \in \mathbb{H}} \frac{1}{M}\sum_{i=1}^{M}\mathcal{L}_{CE}(H_i) + \lambda \frac{1}{M(M-1)}\sum_{i \neq j}\mathcal{C}_{H_i,H_j}, $}
\end{equation}
where $\mathcal{L}_{CE}(H_i)$ indicates the loss of hypothesis $H_i$ in the main task, $\lambda$ is an adjustable parameter.

We construct an instantiated MC~Ensemble with three individuals trained on cross-entropy, SimCLR, and SupCon loss respectively. All three individual models are trained on the whole dataset, given different initializations. 


\vspace{-1mm}
\section{Experiment}

\textbf{Datasets:} We evaluate Multi-Comprehensive Ensemble on two benchmarks: CIFAR Benchmark and ImageNet Benchmark. In CIFAR Benchmark, CIFAR10~\cite{krizhevsky2009learning} is used as ID dataset, and SVHN~\cite{netzer2011reading}, iSUN~\cite{xu2015turkergaze}, LSUN~\cite{yu2015lsun}, Texture~\cite{cimpoi2014describing} and Places365~\cite{zhou2017places} are used as OOD datasets. Furthermore, CIFAR100~\cite{krizhevsky2009learning} is also tested as OOD to evaluate near OOD performance. In ImageNet Benchmark, Imagenet-1K~\cite{deng2009imagenet} is used as the ID dataset, and Places365~\cite{zhou2017places}, SUN~\cite{xiao2010sun}, Texture~\cite{cimpoi2014describing} and iNaturalist~\cite{van2018inaturalist} are used as OOD datasets. We use 4 NVIDIA A100s for model training.

\begin{table*}
  \caption{\textbf{Comparison with naive ensemble.} Models in naive ensemble are trained from different weight initialization. Models in 3$\times$SupCE* are trained with independently-sampled 80\% training set. Scoring method is KNN. All results are in percentages. Some of the baseline results are from~\cite{sun2022out}.}

  \centering    
  \tabcolsep 1pt
  \begin{tabular}{lccccccccccccc}
    \toprule

     \multicolumn{13}{c}{\textbf{OOD Dataset}}\\
   & \multicolumn{2}{c}{\textbf{SVHN}}   & \multicolumn{2}{c}{\textbf{LSUN}}              & \multicolumn{2}{c}{\textbf{iSUN}}      & \multicolumn{2}{c}{\textbf{Texture}}     & \multicolumn{2}{c}{\textbf{Places365}}     & \multicolumn{2}{c}{\textbf{Average}} & \textbf{SCI} \\
    \cmidrule(r){2-3}  \cmidrule(r){4-5}  \cmidrule(r){6-7}  \cmidrule(r){8-9}  \cmidrule(r){10-11}  \cmidrule(r){12-13}
      Methods   & FPR95     & AUROC & FPR95     & AUROC & FPR95     & AUROC       & FPR95     & AUROC & FPR95     & AUROC & FPR95     & AUROC & ~ \\
    \midrule
    \multicolumn{13}{c}{\textbf{Single Model}}\\
    SupCE  &24.53 &95.69 &25.29 &95.96 &25.55 &95.26 &27.57 &94.71 &50.90 &89.14 &30.77 &94.15 & ~\\ 
    SimCLR &41.69 &92.07 &29.68 &93.44& 43.60 &91.60 &32.98 &92.77 &38.41 &91.72 &29.27 &94.62 &~\\
    SupCon &2.42 &99.52 &1.78 &99.48 &20.06 &96.74 &8.09 &98.56 &23.02 &95.36 &11.07 &97.93&~\\
    \multicolumn{13}{c}{\textbf{Naive Ensemble}}\\
    3$\times$SupCE &21.39 &95.89 &27.15 &94.99 &23.55 &95.37 &19.93 &96.35 &46.88 &90.21 &27.78 &94.56 & 0.868\\
    3$\times$SimCLR &47.48 &88.08 &43.99 &89.98 &36.02 &93.01 &24.24 &92.75 &43.50 &89.39 &39.04 &90.64 & 0.841\\
    3$\times$SupCon &2.21 &99.51 &1.88 &99.44 &12.06 &97.74 &7.19 &98.66 &23.37 &95.02 & 10.34 &98.07 & 0.835\\
    3$\times$SupCE* &52.37 &87.06 &39.41 &90.34 &45.55 &88.27 &48.33 &87.71 &68.90 &74.22 &50.91 &85.52 & 0.834\\
    \midrule
    \textbf{MC Ens.} & \textbf{1.35} & \textbf{99.70} & \textbf{1.45} & \textbf{99.80} &\textbf{7.88} &\textbf{98.09} & \textbf{4.07} & \textbf{99.05} & \textbf{13.19} & \textbf{97.01} & \textbf{5.58} & \textbf{98.73} & \textbf{0.134}\\
    \bottomrule
  \end{tabular}
  \label{tab:cifar2}
\vspace{-1mm}
\end{table*}

\vspace{2mm}
\textbf{Metrics:}
We evaluate OOD detection methods on two standard metrics following common practice~\cite{hendrycks2017baseline}: (1)~AUROC: the area under the receiving operating curve; AUROC measures the model's ability to distinguish between positive and negative samples. It plots the true positive rate (TPR) against the false positive rate (FPR) at various classification thresholds. (2)~FPR@TPR95~(FPR95): It measures the rate at which the model falsely identifies OOD samples as ID samples while maintaining a true positive rate of 95\% for ID samples. A low FPR95 is desirable as it indicates that the model is able to accurately identify OOD samples without flagging too many ID samples as OOD.

\vspace{1.5mm}
\textbf{Scoring methods:} Since our approach explores diversity at the feature representation level, it can be combined with a variety of post-hoc OOD detection scoring metrics based on feature representation. We consider the following mostly-used scoring metrics: 
    (1) MSP~\cite{hendrycks2017baseline},
    (2) Mahalanobis distance~\cite{lee2018simple},
    (3) Energy~\cite{liu2020energy},
    (4) KNN~\cite{sun2022out}. 
These methods work on the premise that ID and OOD feature representations need to be distinguishable. A detailed description of these methods can be found in Appendix \ref{A4.1}.

\subsection{CIFAR10 Benchmark}

\textbf{Training details:} We use ResNet-18 as the backbone of individual models for CIFAR10 benchmark. The number of individual models $M$ in the ensemble is set to 3. We train \textit{SupCE} model with the cross-entropy loss with SGD for 500 epochs, with a batch size of 512. The learning rate starts at 0.5 with a cosine annealing schedule~\cite{loshchilov2017sgdr}. The \textit{SimCLR} model and \textit{SupCon} model are trained following the original setting of \cite{chen2020simple} and~\cite{khosla2020supervised} separately.  Results on ResNet-50 are presented in Appendix \ref{A5}.

\textbf{MC Ensemble's outstanding performance: }We present our OOD detection performance in Table~\ref{tab:cifar}. 
    We compare our results with several baseline models, including MSP~\cite{hendrycks2017baseline}, ODIN~\cite{liang2018enhancing}, Mahalanobis Distance~\cite{lee2018simple}, Energy~\cite{liu2020energy}, KNN~\cite{sun2022out}, CSI~\cite{tack2020csi}, SSD+~\cite{sehwag2021ssd} and KNN+~\cite{sun2022out}. 
Among them, CSI, SSD+, and KNN+ are with contrastive training. When combined with the KNN scoring method, MC Ensemble outperforms other methods on four datasets and averages. 
    Further, MC Ensemble, when combined with MSP, Mahalanobis Distance, Energy, and KNN, outperforms the OOD detection performance of the original single training model~(w/ or w/o contrastive training) under these scoring methods, except on iSUN dataset compared with Mahalanobis distance. SOTA comparison is in Table~\ref{tab:sota}.

\begin{table*}[h]
  \caption{ Comparison with state-of-the-art OOD detection methods. All results are in percentages. *: Outlier Exposure based model.}
  \resizebox{\textwidth}{!}{
  \centering    
  \tabcolsep 1pt
  \begin{tabular}{lcccccccccccc}
    \toprule

     \multicolumn{13}{c}{\textbf{OOD Dataset}}\\
   & \multicolumn{2}{c}{\textbf{SVHN}}   & \multicolumn{2}{c}{\textbf{LSUN}}              & \multicolumn{2}{c}{\textbf{iSUN}}      & \multicolumn{2}{c}{\textbf{Texture}}     & \multicolumn{2}{c}{\textbf{Places365}}     & \multicolumn{2}{c}{\textbf{Average}}  \\
    \cmidrule(r){2-3}  \cmidrule(r){4-5}  \cmidrule(r){6-7}  \cmidrule(r){8-9}  \cmidrule(r){10-11}  \cmidrule(r){12-13}
      Methods   & FPR95     & AUROC & FPR95     & AUROC & FPR95     & AUROC       & FPR95     & AUROC & FPR95     & AUROC & FPR95     & AUROC \\
    \midrule
    FeatureNorm~\cite{yu2023block} & 7.13  & 98.65 & 27.08 & 95.25 & 26.02 & 95.38 & 31.18 & 92.31 & 62.54 & 84.62 & 30.79 & 93.24 \\
DOE*~\cite{wang2023outofdistribution}        & 2.65  & 99.36 & 0     & 99.89 & 0.75  & 99.67 & 7.25  & 98.47 & 15.1  & 96.53 & 5.15  & 98.78 \\
CIDER~\cite{ming2023how}       & 3.04  & 99.5  & 4.1   & 99.14 & 15.94 & 97.1  & 13.19 & 97.3  & 26.6  & 94.64 & 12.57 & 97.55 \\
SHE~\cite{zhang2023outofdistribution}         & 5.87  & 98.74 & 6.67  & 98.42 & 4.16  & 98.85 &       &       & 6.31  & 98.7  &       &       \\
DICE~\cite{sun2022dice}       & 25.99 & 95.9  & 3.91  & 99.2  & 4.36  & 99.14 & 41.9  & 88.18 & 48.59 & 89.11 & 24.95 & 94.3  \\
ASH-S~\cite{djurisic2023extremely}       & 6.51  & 98.56 & 4.96  & 98.92 & 5.17  & 98.9  & 24.34 & 95.09 & 48.45 & 88.31 & 17.89 & 95.96 \\
\textbf{MC Ens.}       & 1.35  & 99.7  & 1.45  & 99.8  & 7.88  & 98.09 & 4.07  & 99.05 & 13.19 & 97.01 & 5.58  & 98.73\\
    \bottomrule
  \end{tabular}}
  \label{tab:sota}

\end{table*}
\begin{table}

  \caption{\textbf{Results on CIFAR10 vs CIFAR100.}  }

  \centering    
  \tabcolsep 14pt
  \begin{tabular}{lcc}
    \toprule     
      Methods   & FPR95     & AUROC  \\
    \midrule
     \multicolumn{3}{c}{\textbf{Single Model}}\\
    SupCE  & 56.76 &88.74 \\ 
    SimCLR &62.38 &89.97 \\
    SupCon &37.42 &92.56 \\
    \multicolumn{3}{c}{\textbf{Naive Ensemble}}\\
    3$\times$SupCE &53.41 &89.22 \\
    3$\times$SimCLR &68.53 &88.44 \\
    3$\times$SupCon &36.72 &92.52 \\
    \midrule
    \textbf{MC Ens.} & \textbf{23.35} & \textbf{94.51} \\
    \bottomrule
  \end{tabular}\vspace{-6 mm}
  \label{tab:cifar100}
\end{table}
\begin{table*}
  \caption{\textbf{Results on ImageNet Benchmark.}  All results are in percentages. Scoring method is KNN.}

  \centering    
  \tabcolsep 1.5pt
  \begin{tabular}{lcccccccccc}
    \toprule

     \multicolumn{11}{c}{\textbf{OOD Dataset}}\\
   & \multicolumn{2}{c}{\textbf{iNaturalist}}   & \multicolumn{2}{c}{\textbf{SUN}}              & \multicolumn{2}{c}{\textbf{Places}}      & \multicolumn{2}{c}{\textbf{Textures}}   & \multicolumn{2}{c}{\textbf{Average}}  \\
    \cmidrule(r){2-3}  \cmidrule(r){4-5}  \cmidrule(r){6-7}  \cmidrule(r){8-9}  \cmidrule(r){10-11}  
      Methods   & FPR95     & AUROC & FPR95     & AUROC & FPR95     & AUROC       & FPR95     & AUROC & FPR95     & AUROC  \\
    \midrule
    \multicolumn{11}{c}{\textbf{Single Model}}\\
    SupCE  &59.00 & 86.47 & 68.82 & 80.72 & 76.28 & 75.76 & 11.77 & 97.07 & 53.97 & 85.01\\ 
    SimCLR & 49.88 & 88.34 & 78.62 & 79.57 & 63.65 & 82.35 & 13.87 & 96.33 & 51.51 & 86.65 \\
    SupCon &30.18 & 94.89 & 48.99 & 88.63 & 59.15 & 84.71 & 15.55 & 95.40 & 38.47 & 90.91\\
    \multicolumn{11}{c}{\textbf{Naive Ensemble}}\\
    3$\times$SupCE  &53.32 & 87.95 & 58.25 & 82.98 & 56.28 & 81.01 & 17.71 & 94.31 & 46.39 & 86.56\\
    3$\times$SimCLR & 46.37 & 88.31 & 77.34 & 80.39 & 64.88 & 82.64 & 15.97 & 95.55 & 51.14 & 86.72 \\
    3$\times$SupCon & 28.93 & 95.21 & \textbf{38.69} & \textbf{91.32} & 59.66 & 84.69 & 15.41 & 95.36 & 35.67 & 91.64\\
    \midrule
    \textbf{MC Ens.} & \textbf{15.39} & \textbf{96.78} & 42.97 & 90.35 &\textbf{54.89} &\textbf{87.34} & \textbf{9.54} & \textbf{97.77}  & \textbf{30.69} & \textbf{93.06}\\
    \bottomrule
  \end{tabular}
  \vspace{-4 mm}
  \label{tab:imagenet}

\end{table*}
\textbf{MC Ensemble leverages the diversity of feature representation:} We compare our MC Ensemble with other ensemble strategies.
    As shown in Table~\ref{tab:cifar2}, we make the naive deep ensemble whose individual models share the same training criterion but are with different weight initializations. 
        Compared with single models, the naive ensemble indeed improves the OOD detection performance when using \textit{SupCE} and \textit{SupCon} training. 
            However, the improvement is limited while we also notice a decrease when we conduct a self-supervised \textit{SimCLR} ensemble.
        When training individual models with the partial dataset, the OOD detection performance drop quickly, even with an ensemble. 
            We argue that this is because OOD detection performance is positively correlated with ID performance; training with a partial dataset will cause the degradation of the model's cognition capacity. 

        MC Ensemble outperforms all the naive ensembles with all scoring methods. This shows that the ensemble's multi-comprehension of the data, i.e., the feature representation diversity with multiple training tasks, brings a significant improvement in OOD performance.        
            Table~\ref{tab:cifar2} shows the results of the KNN scoring method. Results on more scoring methods are presented in Appendix \ref{A6}. 

\vspace{3mm}
\textbf{NearOOD setting:} Near OOD samples are similar to the training data, but still different enough to be considered OOD. We evaluate MC Ensemble near OOD performance on the CIFAR10-vs-CIFAR100 task, which considers CIFAR100 as an OOD dataset. 
    As shown in Table~\ref{tab:cifar100}, consistent with the previous CIFAR benchmark experiments, the naive ensemble does not provide significant OOD performance improvement in the near OOD setting either. 
        Compared with the naive ensemble with 3 models trained with cross-entropy loss and different initializations, MC Ensemble reduces the FPR95 by 30.05\% and improves the AUROC by 5.29\%. 
MC Ensemble outperforming 3$\times$SupCon ensemble indicates that the OOD detection performance improvement is not only gained from supervised contrastive training but also multi-comprehension feature representation diversity.

\subsection{ImageNet Benchmark}
\textbf{Training details:} Following~\cite{sun2022out}, we use ResNet-50 as the backbone of individual models for the ImageNet benchmark. The models are trained on ImageNet-1k~\cite{deng2009imagenet} with resolution 224$\times$224. For \textit{SupCE} model, we import the model from torchvision~\cite{paszke2019pytorch}. 
    The \textit{SimCLR} and \textit{SupCon} models are trained following the original setting in \cite{chen2020simple} and~\cite{khosla2020supervised} separately. 
Results of ViT-B~\cite{dosovitskiy2020image} MC Ensemble trained with cross-entropy, MOCO~v3~\cite{chen2021empirical} and MAE~\cite{he2022masked} are presented in Appendix \ref{A7}.

\textbf{MC Ensemble achieves outstanding performance in large-scale task:} As shown in Table~\ref{tab:imagenet}, consistent with the CIFAR10 benchmark, MC Ensemble outperforms all the naive ensembles on all the OOD datasets except 3$\times$SupCon on SUN dataset. This is most likely because the gap between the OOD detection performance of supervised contrastive training on the ImageNet benchmark and the other two is too large. A comparison with other baseline methods is presented in Appendix \ref{A8}.

\section{Ablation Study}
\label{sec:6}

\textbf{Significance of supervised contrastive training:}
As noticed in~\cite{sun2022out}, supervised contrastive training provides feature representation that is helpful to OOD detection performance. We verify this in Table~\ref{tab:cifar2}. 
    However, we argue that \textit{SupCon} training is not the only contributor to OOD performance improvement in MC Ensemble. 
    As shown in Table~\ref{tab:ablation}, 2$\times$SupCon ensemble can not beat either SupCE+SupCon or SimCLR+SupCon ensemble, indicating that multi-comprehension feature diversity also contributes to OOD detection performance.
\begin{table}

  \caption{\textbf{Ablation Study.} ResNet-62 contains 4 more blocks compared to ResNet-50.}
  \centering    
  \tabcolsep 6pt
  \vskip 0.15in
  \begin{tabular}{lccc}
    \toprule     
      Methods  &\#Params. & FPR95     & AUROC  \\
    \midrule
    2$\times$SupCon & 18.26 & 12.34 &97.09\\
    SupCE+SimCLR & 18.26 & 24.98 & 95.31 \\
    SupCE+SupCon & 18.26 & 9.37 & 97.95 \\
    SimCLR+SupCon & 18.26 & 9.42 & 97.81 \\
    ResNet-62 & 27.50 & 23.79 & 95.34\\
    3$\times$SupCE & 27.39 & 27.78 & 94.56 \\
    \midrule
    \textbf{MC Ens.} & 27.39 & \textbf{5.58} & \textbf{98.73} \\
    \textbf{Distillation} & 9.13 & 8.17 & 98.13\\
    \bottomrule
  \end{tabular}\vspace{-6mm}
  \label{tab:ablation}
\end{table}

\textbf{Comparison with a single model with the same scale:} \cite{abe2022deep} points out that an ensemble has similar performance on OOD to a single model of similar size. In Table~\ref{tab:ablation}, We confirm that this conclusion holds for the naive ensemble by comparing ResNet-62 with the ensemble of 3 ResNet-18s. However, MC Ensemble still significantly outperforms larger single models, indicating that model scale is not the only contributor to MC Ensemble's performance.

\textbf{Combinability with distillation:}
Despite its effectiveness, the use of an ensemble can be limited by the high computational expenses it incurs, making it impractical for certain applications. 
    For its computational overhead, knowledge distillation is an effective method for ensemble model compression.
        To verify whether MC Ensemble's distillable, we directly employ Ensemble Distribution Distillation (EnD$^2$)~\cite{Malinin2020Ensemble} to distill our MC Ensemble to a single model. 
    As shown in Table~\ref{tab:ablation}, with a minimal drop, the distillation model of MC Ensemble maintains a strong OOD detection performance, but the computation overhead in the inference stage is the same as a single individual. 

\vspace{3.5mm}
\section{Ensemble Latency Analysis}
\label{A9}
 The components that contribute to the latency of the OOD detection method usually contain such two phases: (1) Generation of penultimate layer feature representations, (2) Computation of out-of-distribution discriminant score.

In the first phase, latency depends on the inference time of the backbone network. The computational overhead of an MC Ensemble consisting of $M$ individual models is $M\times $ that of a single model (we ignore the overhead of averaging feature, since it is negligibly small compared to neural network models). However, since these individual models are independent of each other, they are model-level parallelizable. Typically, in inference phases with sufficient computation resources, it is possible to achieve a high degree of parallelism of these individual models die on a single device with the help of Nvidia's Multi-Process Scheduling (MPS) or Multi-Instance GPU (MIG) technology. We conduct following experiments to support the above conclusion: we set three individual models (3 x ResNet-18) as multiple independent processes, and utilized Nvidia MIG technology to let these models run simultaneously on the same A100 GPU, with batch size set to 16 (small batch size is more in line with real-world real-time reasoning needs), the latency of MC Ensemble to generate the penultimate layer features was 9.4 ms, while the single ResNet-18 model took 9.3 ms. Furthermore, we tested against bigger models, and MC ensemble can even achieve faster inference than some standalone models of the same size. For example, ResNet-62, which is the same size as MC Ensemble, took 17.0 ms on the same hardware. 

In the second phase, latency depends on the OOD scoring methods. Since MC ensemble can be used with any kind of OOD discriminant scoring metric, it does not have any computational difference in latency compared to other post-hoc OOD detection methods if we use the same scoring metric. With the hyperparameters determined, the computational complexity of the OOD discriminant score depends only on the dimensions of the features, and our strategy of using feature average ensures that the feature dimensions are invariant compared to the original backbone network. Therefore the second phase computational latency depends only on which OOD scoring metric is combined with the MC Ensemble.

\section{Conclusion}


In this paper, we reveal that the different initializations of an original ensemble model do not provide sufficient feature representation diversity, thereby resulting in only minor performance improvements for OOD detection.
By demonstrating that training tasks can induce multiple comprehension of the ensemble model in both feature space similarity angle and loss landscape angle, we propose a method, named MC Ensemble, to enhance the diversity of feature representation, which improves the OOD detection performance of ensemble models.
We validate the excellent performance of MC Ensemble through experimental evaluation on CIFAR10 and ImageNet Benchmark datasets.

\vspace{5mm}
\section*{Impact Statement}

Generally, we believe OOD detection is an important component of AI safety. Enhancing OOD detection impacts the reliability of AI application in autonomous driving, healthcare, and others. The negative impact may be that the large amount of computation of ensemble-based model may cause larger computational resource footprint and carbon footprint. Disscusion on limitations can be found in Appendix \ref{A10}.

\section*{Acknowledgements}
This material is based on research sponsored by the Air Force Research Laboratory (AFRL)
under agreement number FA8750-21-1-1015. The U.S. Government is authorized to reproduce and
distribute reprints for Governmental purposes notwithstanding any copyright notation thereon.
 The views and conclusions contained herein are those of the authors and should not be interpreted as
necessarily representing the official policies or endorsements, either expressed or implied, of the Air
Force Research Laboratory (AFRL) or the U.S. Government.

\bibliographystyle{_bib/icml2023}
\bibliography{_bib/icml2024}

\newpage

\newpage

\appendix
\onecolumn

\section{Appendix}
\subsection{Bias-Variance Decomposition of OOD Detection Ensemble}
\label{A1}
\subsubsection{Bias-Variance Decomposition of OOD Detection Model}
We consider the OOD detection task as a 0-1 classification problem on an open set, where samples sampled from a distribution consistent with the training set $D_{train}$ (ID) are regarded as positive, and samples sampled from outside the distribution (OOD) are regarded as negative (assumed to be sampled from $D_{OOD}$).

The probability that an OOD detector coupled with a neural network trained on hypothesis $H$ regard sample $x$ as positive can be formulated as:
\begin{equation}
    P(\Gamma_H = 1 |x) = E[\mathbf{1}[s(g_H(x))>\tau]] = P(s(g_H(x))>\tau)    
\end{equation}

\begin{equation}
    \Gamma_H(x,\tau) = \left\{\begin{array}{lr}
        1, & \text{if}\ s(g_H(x))>\tau \\
        0, & \text{if}\ s(g_H(x))<\tau
    \end{array}
    \right.
\end{equation}

Denote the ground truth classifier as $\Gamma_T$.
\begin{theorem}
$\Gamma_H$ and $\Gamma_T$ are conditionally independent given target $f$ and a test point $x$.
\end{theorem}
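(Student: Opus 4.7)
The plan is to argue that the claim is essentially a consequence of $\Gamma_T$ being a deterministic function of the conditioning pair $(f,x)$, so that no randomness is left over in $\Gamma_T$ to correlate with $\Gamma_H$. First I would make the probability space explicit: the randomness carried by $\Gamma_H$ comes solely from the training hypothesis $H$, which packages weight initialization, SGD mini-batch order, and (where applicable) the data-holdout split. In contrast, $\Gamma_T$ is the ``perfect'' OOD detector and is by definition the indicator $\mathbf{1}[x \sim D_{\text{train}}]$, determined entirely by $(f,x)$.

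From there I would invoke the basic measure-theoretic fact that $\sigma(\Gamma_T) \subseteq \sigma(f,x)$, so that for any Borel sets $A,B$,
\begin{equation*}
P(\Gamma_H \in A,\; \Gamma_T \in B \mid f, x) = \mathbf{1}[\Gamma_T(f,x) \in B]\cdot P(\Gamma_H \in A \mid f, x),
\end{equation*}
which factors as $P(\Gamma_H \in A \mid f,x)\cdot P(\Gamma_T \in B \mid f,x)$ since $P(\Gamma_T \in B \mid f,x)=\mathbf{1}[\Gamma_T(f,x)\in B]$. This is exactly the definition of conditional independence.

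The only place where care is required, and arguably the ``hard part,'' is spelling out an implicit modeling assumption: the training-side randomness of $H$ must be independent of the generative mechanism producing $f$, i.e.\ the learning procedure does not peek at the test point's label. Once this standard independence-of-train-and-test assumption is acknowledged, the factorization above goes through unconditionally and the theorem follows. I would therefore close the proof by stating this assumption explicitly and noting that it is consistent with the setup of Section~\ref{S2}, where $H$ is constructed from initialization, data distribution, and optimizer randomness that is exogenous to the sampling of the test example $x$.
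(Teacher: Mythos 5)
Your proof is correct and follows essentially the same route as the paper: both arguments reduce to the observation that $\Gamma_T$ is determined by the conditioning pair $(f,x)$, so that $P(\Gamma_T\mid \Gamma_H,f,x)=P(\Gamma_T\mid f,x)$ and the joint factors. The additional assumption you flag about training randomness being independent of the test point is actually superfluous here, since once $\Gamma_T$ is a constant given $(f,x)$ it is conditionally independent of any other variable.
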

\begin{proof}
$P(\Gamma_T,\Gamma_H|f,x)=P(\Gamma_T|\Gamma_H,f,x)P(\Gamma_H|f,x)=P(\Gamma_T|f,x)P(\Gamma_H|f,x)$. 
\end{proof}The last equality is true because by definition, the ground truth classifier $\Gamma_T$ only depends on the target $f$ and the test point $x$.

Regarded as a 0-1 classification problem, the loss of detector $\Gamma_H$ can be subjected to a bias-variance decomposition~\cite{kohavi1996bias} like Eq.~(\ref{equ: composition}),
\begin{equation}
\begin{split}
    \mathcal{L}(\Gamma_H) &= \frac{1}{2}\sum_x P(x)[\underbrace{\sum_{y=0}^1(P(\Gamma_H=y|x)-P(\Gamma_T = y|x))^2}_{bias^2_H}\\
    &+\underbrace{(1-\sum_{y=0}^1P(\Gamma_H=y|x)^2)}_{variance_H}+\sigma_x^2]
        \label{equ: composition}
\end{split}
\end{equation}

where $T$ means ground truth hypothesis which can be seen as a perfect OOD detector and $\sigma_x^2 = 1- \sum_{y=0}^1P(\Gamma_T=y|x)^2$ is an irreducible error.
\begin{proof}
First we consider only the distribution of the detector output:
\begin{equation*}
\begin{aligned}
        \mathcal{L}(\Gamma_H) &= 1 - \sum_{y=0}^1P(\Gamma_H=\Gamma_T=y) \\
    &= \sum_{y=0}^1-P(\Gamma_H=\Gamma_T=y) + \sum_{y=0}^1P(\Gamma_H=y)P(\Gamma_T=y)\\
    & ~~~~+\sum_{y=0}^1[-P(\Gamma_H=y)P(\Gamma_T=y)+\frac{1}{2}P(\Gamma_T=y)^2+\frac{1}{2}P(\Gamma_H=y)^2]\\
    & ~~~~+[\frac{1}{2}-\frac{1}{2}P(\Gamma_H=y)^2]+[\frac{1}{2}-\frac{1}{2}P(\Gamma_T=y)^2]\\
    &= \sum_{y=0}^1[P(\Gamma_H=y)P(\Gamma_T=y)-P(\Gamma_H=\Gamma_T=y)]\\
    &~~~~+\frac{1}{2}\sum_{y=0}^1(P(\Gamma_H=y)-P(\Gamma_T=y))^2\\
    &~~~~+\frac{1}{2}(1-\sum_{y=0}^1P(\Gamma_H=y)^2)\\
    &~~~~+\frac{1}{2}(1-\sum_{y=0}^1P(\Gamma_T=y)^2)
\end{aligned}
\end{equation*}
Due to the independence between the detector and ground truth, the first term disappears. Now, we consider the conditional probabilities on the data set.

\begin{equation*}
\begin{aligned}
        \mathcal{L}(\Gamma_H) &= 1 - \sum_{x}P(x)\sum_{y=0}^1P(\Gamma_H=\Gamma_T=y|x) \\
    &= \sum_{x}P(x)\frac{1}{2}\sum_{y=0}^1(P(\Gamma_H=y|x)-P(\Gamma_T=y|x))^2 ~~~~~~~~~~~&(bias^2_H)\\
    &~~~~+\sum_{x}P(x)\frac{1}{2}(1-\sum_{y=0}^1P(\Gamma_H=y|x)^2)&(variance_H)\\
    &~~~~+\sum_{x}P(x)\frac{1}{2}(1-\sum_{y=0}^1P(\Gamma_T=y|x)^2)&(\sigma^2_x)
\end{aligned}
\end{equation*}
\end{proof}
\subsubsection{Variance Term Decomposition of OOD Detection Ensemble}
The ensemble is widely used in the deep learning community as a scalable and simple method. 
    The core idea of the ensemble is to exploit the diversity among different models. The variance of an ensemble in Eq.~(\ref{equ: composition}) with $M$ individuals which are based on hypothesis $H_i, i \in \{1, \cdots, M\}$ can be further composed to:
\begin{equation}
\begin{split}
    &variance_{ens} = \frac{1}{M}[\underbrace{\frac{1}{M}\sum_{i=1}^{M}(1-\sum_{y=0}^1P(\Gamma_{H_i}=y|x)^2)}_{E[variance_{H_i}]}\\
    &+\underbrace{\frac{1}{M}\sum_{i=1}^{M}\sum_{j \neq i}(1-\sum_{y=0}^1P(\Gamma_{H_i}=y|x)P(\Gamma_{H_j}=y|x))}_{covariance}],
    \end{split}
\end{equation}
hence, the variance of an ensemble can be bounded by a lower boundary $\frac{1}{M}E[variance_{H_i}]$ and an upper boundary $E[variance_{H_i}]$.

\begin{proof}
For an ensemble model, the variance term can be further decomposed: 
\begin{equation*}
    \begin{aligned}
        variance_{ens} &= 1-\sum_{y=0}^1(\sum_{i=1}^MP(\Gamma_{H_i}=y|x))^2\\
        &= \frac{1}{M^2}\sum_{i=1}^{M}\sum_{j=1}^{M}(1-\sum_{y=0}^1P(\Gamma_{H_i}=y|x)P(\Gamma_{H_j}=y|x))\\
        &=\frac{1}{M}[\frac{1}{M}\sum_{i=1}^{M}(1-\sum_{y=0}^1P(\Gamma_{H_i}=y|x)^2)~~~~&(E[variance_{H_i}])\\
        &~~~~+\frac{1}{M}\sum_{i=1}^{M}\sum_{j \neq i}(1-\sum_{y=0}^1P(\Gamma_{H_i}=y|x)P(\Gamma_{H_j}=y|x))],&(covariance)
    \end{aligned}
\end{equation*}
Therefore,
\begin{equation*}
    \begin{aligned}
\frac{1}{M}E[variance_{H_i}] & = \frac{1}{M}[\frac{1}{M}\sum_{i=1}^{M}(1-\sum_{y=0}^1P(\Gamma_{H_i}=y|x)^2) \\
&\leq \frac{1}{M}[\frac{1}{M}\sum_{i=1}^{M}(1-\sum_{y=0}^1P(\Gamma_{H_i}=y|x)^2)\\
        &~~~~+\frac{1}{M}\sum_{i=1}^{M}\sum_{j \neq i}(1-\sum_{y=0}^1P(\Gamma_{H_i}=y|x)P(\Gamma_{H_j}=y|x))]\\
        &\leq \frac{1}{M}[\frac{1}{M}\sum_{i=1}^{M}(1-\sum_{y=0}^1P(\Gamma_{H_i}=y|x)^2)\\
        &~~~~+\frac{1}{M}\sum_{i=1}^{M}(M-1)(1-\sum_{y=0}^1P(\Gamma_{H_i}=y|x)^2] \\
        & = E[variance_{H_i}],
    \end{aligned}
\end{equation*}
\end{proof}

When the $covariance$ is 0, i.e., all detectors are completely uncorrelated, the variance of the ensemble can reach the lower bound, which is $\frac{1}{M}$ in a single model; 
        while when all models are highly similar, the covariance of the models will become larger and the significance of the ensemble will then diminish.
    Therefore, substantial model diversity can significantly reduce the covariance term, thus improving the ensemble performance.
    
\begin{figure}[h!]
    \centering
    \includegraphics{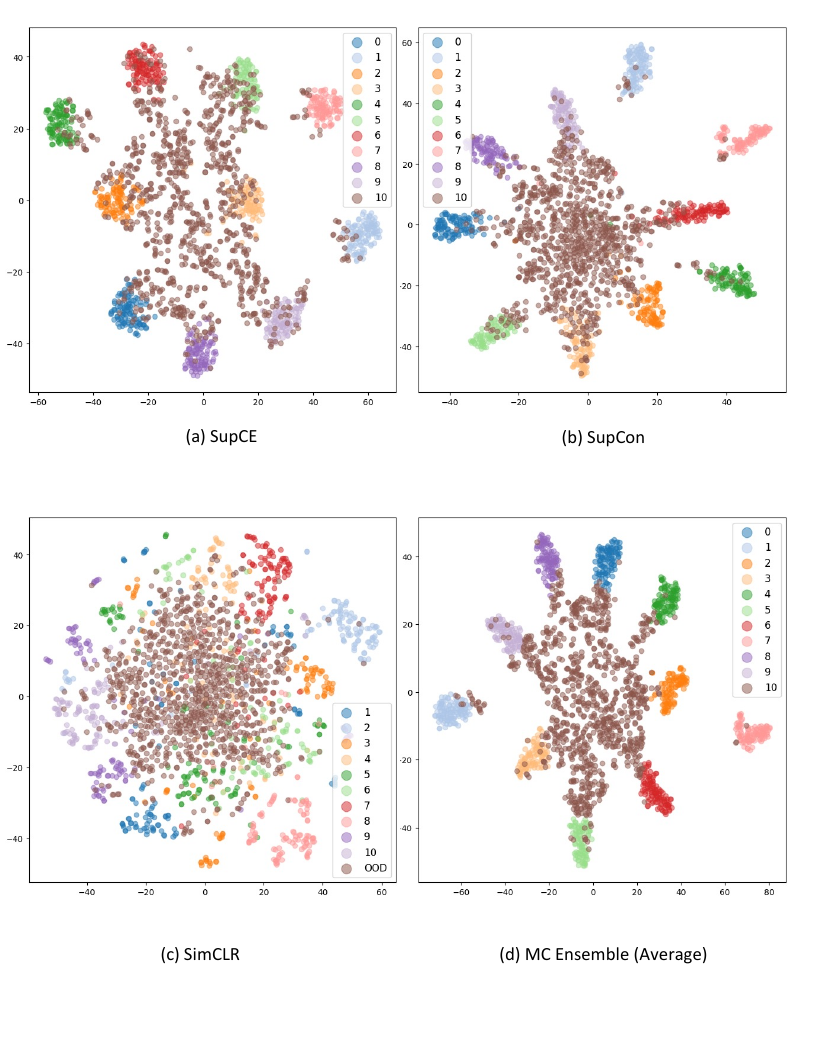}
    \caption{t-SNE visualization of penultimate layer features of SupCE, SupCon, SimCLR, MC Ensemble(average), while class 0-9 are ID classes (CIFAR10) and class 10 is OOD (CIFAR100).}
    \label{fig:feature}
\end{figure} 

\subsection{No Counteraction in Feature-level Ensemble}
\label{A2.1}
Averaging features will not lead to counteraction because the fact that: The feature space is a high-dimensional space, in which arbitrary two vectors are almost orthogonal. This is based on the fact that, for a n-dim space, the angle $\theta$ between any two vectors satisfies: $P(|\theta-\frac{\pi}{2}|\leq m) = 1 - \frac{\int_{0}^{\frac{\pi}{2}-m}\sin^{n-2}\theta d\theta}{\int_{0}^{\frac{\pi}{2}}\sin^{n-2}\theta d\theta}$, where the numerator $\int_{0}^{\frac{\pi}{2}-m}\sin^{n-2}\theta d\theta < (\frac{\pi}{2}-m)\sin^{n-2}(\frac{\pi}{2}-m)$ decreases exponentially with $n$, while the denominator $\int_{0}^{\frac{\pi}{2}}\sin^{n-2}\theta d\theta > \frac{2\sqrt2}{3\sqrt{n-2}}$ decreases no faster than $o(\sqrt n)$. This suggests that any two features are almost orthogonal and that averaging them will not counteract each other. Fig.~\ref{fig:feature} gives an example of feature-level ensemble's t-SNE visualization.

\subsection{Feature-level Ensemble Activation Analysis}
\label{A2}
Suppose $X \sim ESN(\mu,\sigma,\epsilon)$. First, we consider the probability density function of X:
\begin{equation}
  p(x) =  \begin{cases}
          \phi((x-\mu)/(1+\epsilon)\sigma)/\sigma, \qquad\text{if} \quad x<\mu,\\ \phi((x-\mu)/(1-\epsilon)\sigma)/\sigma,\qquad \text{if} \quad x\geq\mu, 
    \end{cases} 
\end{equation}
Therefore, considering activation function, $Z = max(0,X)$, we have expectation: 
\begin{equation}
\begin{aligned}
    \mathbb{E}[Z] &= \mu+\int\limits_{-\infty}^{-\mu}-\frac{\mu}{\sigma}\phi\left(\frac{x}{(1+\epsilon)\sigma}\right)\,\mathrm{d}x+\int\limits_{-\mu}^{0}\frac{x}{\sigma}\phi\left(\frac{x}{(1+\epsilon)\sigma}\right)\,\mathrm{d}x+\int\limits_{0}^{\infty}\frac{x}{\sigma}\phi\left(\frac{x}{(1-\epsilon)\sigma}\right)\,\mathrm{d}x \\
&= \mu-\mu(1+\epsilon)\Phi\left(\frac{-\mu}{(1+\epsilon)\sigma}\right) + (1+\epsilon)\int\limits_{-\mu}^{0}\frac{x}{(1+\epsilon)\sigma}\phi\left(\frac{x}{(1+\epsilon)\sigma}\right)\,\mathrm{d}x +(1-\epsilon)\int\limits_{0}^{\infty}\frac{x}{(1-\epsilon)\sigma}\phi\left(\frac{x}{(1-\epsilon)\sigma}\right)\,\mathrm{d}x \\
&= \mu-\mu(1+\epsilon)\Phi\left(\frac{-\mu}{(1+\epsilon)\sigma}\right) + (1+\epsilon)^2\left[\phi\left(\frac{-\mu}{(1+\epsilon)\sigma}\right)-\phi(0)\right]\sigma + (1-\epsilon)^2\phi(0)\sigma\\
& = \mu\left[1-(1+\epsilon)\Phi\left(\frac{-\mu}{(1+\epsilon)\sigma}\right)\right] +(1+\epsilon)^2\phi\left(\frac{-\mu}{(1+\epsilon)\sigma}\right)\sigma -(1+\epsilon)^2\phi\left(0\right)\sigma +(1-\epsilon)^2\phi\left(0\right)\sigma \\
& = \mu\left[1-(1+\epsilon)\Phi\left(\frac{-\mu}{(1+\epsilon)\sigma}\right)\right] + (1+\epsilon)^2\phi\left(\frac{-\mu}{(1+\epsilon)\sigma}\right)\sigma -4\epsilon\phi(0)\sigma \\
& = \mu\left[1-(1+\epsilon)\Phi\left(\frac{-\mu}{(1+\epsilon)\sigma}\right)\right] + (1+\epsilon)^2\phi\left(\frac{-\mu}{(1+\epsilon)\sigma}\right)\sigma -\frac{4\epsilon\sigma}{\sqrt{2\pi}}, \\
\end{aligned}
\label{expe}
\end{equation}
where $\Phi(\cdot)$ and $\phi(\cdot)$ denote the cdf and pdf of a standard normal distribution separately.

For a single model's activation, substitute $\sigma = \sigma_{\text{in}}$ and $\epsilon = 0$ into Eq.(\ref{expe}), then the expectation of ID activation will be:
\begin{equation}
    \mathbb{E}_{\text{in}}[z_i] = \left[1-\Phi\left(\frac{-\mu}{\sigma_{\text{in}}}\right)\right]\mu+\phi\left(\frac{-\mu}{\sigma_{\text{in}}}\right)\sigma_{\text{in}},
\end{equation}
 For a feature-level ensemble that averages the pre-activation features of $M$ models, the pre-activation feature $x_i \sim \mathcal{N}(\mu,\frac{\sigma_{\text{in}}^2}{M})$, substitute $\sigma = \sigma_{\text{in}}/\sqrt{M}$ and $\epsilon = 0$ into Eq.~(\ref{expe}): 
\begin{equation}
    \mathbb{E}_{\text{in}}[\bar{z_i}] = \left[1-\Phi\left(\frac{-\mu\sqrt{M}}{\sigma_{\text{in}}}\right)\right]\mu+\phi\left(\frac{-\mu\sqrt{M}}{\sigma_{\text{in}}}\right)\frac{\sigma_{\text{in}}}{\sqrt{M}}.
\end{equation}

For OOD data, it can be obtained in the same way that the single model and ensemble activation's expection would be:
\begin{equation}
     \mathbb{E}_{\text {out}}\left[z_i\right]=\mu-(1+\epsilon) \Phi\left(\frac{-\mu}{(1+\epsilon) \sigma_{\text {out}}}\right) \mu\\+(1+\epsilon)^2 \phi\left(\frac{-\mu}{(1+\epsilon) \sigma_{\text {out}}}\right) \cdot \sigma_{\text {out}}-\frac{4 \epsilon}{\sqrt{2 \pi}} \sigma_{\text {out}},
\end{equation}
\begin{equation}
     \mathbb{E}_{\text {out}}\left[\bar{z_i}\right]=\mu-(1+\epsilon) \Phi\left(\frac{-\mu\sqrt{M}}{(1+\epsilon) \sigma_{\text {out}}}\right) \cdot \mu\\+(1+\epsilon)^2 \phi\left(\frac{-\mu\sqrt{M}}{(1+\epsilon) \sigma_{\text {out}}}\right) \frac{\sigma_{\text {out}}}{\sqrt{M}}-\frac{4 \epsilon}{\sqrt{2 \pi M}} \sigma_{\text {out}},
\end{equation}

For ID data, compared with a single model, the average movement of feature averaging ensemble for activation is:
\begin{equation}
     \mathbb{E}_{\text{in}}[\bar{z_i}-z_i] = \mu\left[\Phi\left(\frac{\mu\sqrt{M}}{\sigma_{\text{in}}}\right)-\Phi\left(\frac{\mu}{\sigma_{\text{in}}}\right)\right]+\sigma_{\text{in}}\left[\frac{1}{\sqrt{M}}\phi\left(\frac{\mu\sqrt{M}}{\sigma_{\text{in}}}\right)-\phi\left(\frac{-\mu}{\sigma_{\text{in}}}\right)\right].
\end{equation}

For OOD data, the corresponding movement will be:
\begin{multline}
    \mathbb{E}_{\text {out}}\left[\bar{z_i}-z_i\right] = \frac{4\epsilon\sigma_{\text{out}}}{\sqrt{2\pi}}\left(1-\frac{1}{\sqrt{M}}\right) +(1+\epsilon)\mu\left[\Phi\left(\frac{\mu\sqrt{M}}{(1+\epsilon)\sigma_{\text{out}}}\right)-\Phi\left(\frac{\mu}{(1+\epsilon)\sigma_{\text{out}}}\right)\right]\\ +(1+\epsilon)^2\sigma_{\text{out}}\left[\frac{1}{\sqrt{M}}\phi\left(\frac{\mu\sqrt{M}}{(1+\epsilon)\sigma_{\text{out}}}\right)-\phi\left(\frac{\mu}{(1+\epsilon)\sigma_{\text{out}}}\right)\right].
\end{multline}
    
Under the same chaoticness level ($\sigma_{\text{in}}=\sigma_{\text{out}}=\sigma$), we make a difference between the ID and OOD movement expectation:
\begin{multline}
\mathbb{E}_{\text{out}}[\bar{z_i}-z_i]-\mathbb{E}_{\text{in}}[\bar{z_i}-z_i] =  \underbrace{\mu\left[(1+\epsilon)\Phi\left(\frac{\mu\sqrt{M}}{(1+\epsilon)\sigma}\right)-(1+\epsilon)\Phi\left(\frac{\mu}{(1+\epsilon)\sigma}\right)-\Phi\left(\frac{\mu\sqrt{M}}{\sigma}\right)+\Phi\left(\frac{\mu}{\sigma}\right)\right]}_{(\MakeUppercase{\romannumeral 1})}
\\  +\underbrace{\sigma\left[\frac{(1+\epsilon)^2}{\sqrt{M}}\phi\left(\frac{\mu\sqrt{M}}{(1+\epsilon)\sigma}\right)-(1+\epsilon)^2\phi\left(\frac{\mu}{(1+\epsilon)\sigma}\right)-\frac{1}{\sqrt{M}}\phi\left(\frac{\mu\sqrt{M}}{\sigma}\right)+\phi\left(\frac{\mu}{\sigma}\right)+\frac{4\epsilon}{\sqrt{2\pi}}\left(1-\frac{1}{\sqrt{M}}\right)\right]}_{(\MakeUppercase{\romannumeral 2})}.
\label{eq:outin}
\end{multline}
Next, we prove that $(\MakeUppercase{\romannumeral 1})$ and $(\MakeUppercase{\romannumeral 2})$ $\leq 0$ separately.
For $(\MakeUppercase{\romannumeral 1})$, we have $\mu > 0$. Since the OOD activation is positive-skewed, we have $-1 <\epsilon < 0$.
Let $a = (1+\epsilon) \in (0,1)$, $b = \frac{\mu}{\sigma} > 0$, and $c = \sqrt{M} > 1$, then:
\begin{equation}
    (\MakeUppercase{\romannumeral 1}) = \mu\left[a\Phi\left(\frac{bc}{a}\right)-a\Phi\left(\frac{b}{a}\right)-\Phi(bc)+\Phi(b)\right].
\end{equation}
Let, $T(x,a) = a\Phi(x/a)-\Phi(x)$, then:
\begin{equation}
     (\MakeUppercase{\romannumeral 1}) = \mu[T(bc,a)-T(b,a)]
\end{equation}
Since (\romannumeral 1): $\frac{\partial T}{\partial x} = \phi(x/a)-\phi(x) <0$ when $x>0$ for all $a \in (0,1)$, (\romannumeral 2): $\mu > 0$, and  (\romannumeral 3): $bc>b$, we have:
\begin{equation}
    (\MakeUppercase{\romannumeral 1}) = \mu[T(bc,a)-T(b,a)] < 0.
    \label{eq:I}
\end{equation}

For $(\MakeUppercase{\romannumeral 2})$, we use same symbol system:
\begin{equation}
    (\MakeUppercase{\romannumeral 2}) = \sigma\left[\frac{a^2}{c}\phi\left(\frac{bc}{a}\right)-a^2\phi\left(\frac{b}{a}\right)-\frac{1}{c}\phi(bc)+\phi(b)+\frac{4\epsilon}{\sqrt{2\pi}}\left(1-\frac{1}{c}\right)\right].
\end{equation}
Let $U(x,a) = a^2\phi(x/a)-\phi(x)$, and:

\begin{equation}
    V(x,a,c) := \sigma\left[\frac{1}{c}U(xc,a)-U(x,a)+\frac{4(a-1)}{\sqrt{2\pi}}\left(1-\frac{1}{c}\right)\right] .
    \label{eq:v}
\end{equation}
Then we have:
\begin{equation}
    (\MakeUppercase{\romannumeral 2}) = V(b,a,c).
\end{equation}
We start with the $b = 0$:
\begin{align}
        V(0,a,c) &= \sigma\left[\frac{1}{c}U(0,a)-U(0,a)+\frac{4\epsilon}{\sqrt{2\pi}}\left(1-\frac{1}{c}\right)\right] \notag\\&= \sigma\left[-(1-\frac{1}{c})U(0,a)+\frac{4\epsilon}{\sqrt{2\pi}}\left(1-\frac{1}{c}\right)\right] \notag\\&=
        \sigma\left[-(1-\frac{1}{c})\frac{a^2-1}{\sqrt{2\pi}}+\frac{4(a-1)}{\sqrt{2\pi}}\left(1-\frac{1}{c}\right)\right] \notag\\&=
        \frac{\sigma}{\sqrt{2\pi}}\left(1-\frac{1}{c}\right)[-a^2+4a-3] \notag \\&\leq 0, \quad \text{if}\quad a \in (0,1).
\end{align}
For $x>0$, we have partial derivatives:
\begin{align}
    \frac{\partial V}{\partial c} &= \sigma\left[-\frac{1}{c^2}U(cx,a)+\frac{1}{c}\frac{\partial U(cx,a)}{\partial c} +\frac{4(a-1)}{\sqrt{2\pi}}\frac{1}{c^2}\right]\notag\\&= \frac{\sigma}{\sqrt{2\pi}}\left[-\frac{1}{c^2}\left(a^2e^{-\frac{c^2x^2}{2a^2}}-e^{-\frac{c^2x^2}{2}}\right)+\frac{1}{c}(-x^2ce^{-\frac{c^2x^2}{2a^2}}+x^2ce^{-\frac{c^2x^2}{2}})+4(a-1)\frac{1}{c^2}\right]\notag\\&=
    \frac{\sigma}{\sqrt{2\pi}}\left[-\frac{1}{c^2}\left(a^2e^{-\frac{c^2x^2}{2a^2}}-e^{-\frac{c^2x^2}{2}}\right)+(-x^2e^{-\frac{c^2x^2}{2a^2}}+x^2e^{-\frac{c^2x^2}{2}})+4(a-1)\frac{1}{c^2}\right]\notag\\&<
    \frac{\sigma}{\sqrt{2\pi}}\left[-\frac{1}{c^2}\left(a^2e^{-\frac{c^2x^2}{2a^2}}-e^{-\frac{c^2x^2}{2}}\right)+(-a^2x^2e^{-\frac{c^2x^2}{2a^2}}+x^2e^{-\frac{c^2x^2}{2}})+4(a-1)\frac{1}{c^2}\right]\notag\\&=
    \frac{\sigma}{\sqrt{2\pi}}\Big[\underbrace{(\frac{1}{c^2}+x^2)}_{>0}\underbrace{\left(e^{-\frac{c^2x^2}{2}}-a^2e^{-\frac{c^2x^2}{2a^2}}\right)}_{<0,\, \text{for}\, a \in (0,1)}+\underbrace{4(a-1)\frac{1}{c^2}}_{<0}\Big] \notag\\&< 0,
    \label{eq:pvpc}
\end{align}
and
\begin{align}
    \frac{\partial V}{\partial a} &= \sigma\left[\frac{1}{c}\frac{\partial U(cx,a)}{\partial a}-\frac{\partial U(x,a)}{\partial a} +\frac{4}{\sqrt{2\pi}}(1-\frac{1}{c})\right]\notag\\&= \frac{\sigma}{\sqrt{2\pi}}\left[\frac{1}{c}(2a+\frac{c^2x^2}{a})e^{-\frac{c^2x^2}{2a^2}}-(2a+\frac{x^2}{a})e^{-\frac{x^2}{2a^2}}\right]+\frac{4\sigma}{\sqrt{2\pi}}(1-\frac{1}{c})\notag
    \\&= \frac{\sigma}{\sqrt{2\pi}}\left[(\frac{2a}{c}+\frac{cx^2}{a})e^{-\frac{c^2x^2}{2a^2}}-(2a+\frac{x^2}{a})e^{-\frac{x^2}{2a^2}}\right]+\frac{4\sigma}{\sqrt{2\pi}}(1-\frac{1}{c}).
\label{eq:pvpa}    
\end{align}
Substitute $c=1$ into the Eq.~(\ref{eq:pvpa}),
\begin{equation}
   \left. \frac{\partial V}{\partial a} \right|_{c=1} = 0.
   \label{eq:pvpac1}
\end{equation}
For c>1, we have:
\begin{equation}
    \frac{\partial V}{\partial a\partial c} = \frac{\sigma}{\sqrt{2\pi}}e^{-\frac{c^2x^2}{2a^2}}\left[-\frac{2a}{c^2}-\frac{x^2}{a}+\frac{c^2x^4}{a^3}\right] +\frac{4\sigma}{\sqrt{2\pi}c^2} .
\end{equation}
When $-\frac{2a}{c^2}-\frac{x^2}{a}+\frac{c^2x^4}{a^3} \geq 0$, its obvious that $ \frac{\partial V}{\partial a\partial c} > 0$.
When $-\frac{2a}{c^2}-\frac{x^2}{a}+\frac{c^2x^4}{a^3} < 0$, we have:
\begin{align}
    \frac{\partial V}{\partial a\partial c} &= \frac{\sigma}{\sqrt{2\pi}}e^{-\frac{c^2x^2}{2a^2}}\left[-\frac{2a}{c^2}-\frac{x^2}{a}+\frac{c^2x^4}{a^3}\right] +\frac{4\sigma}{\sqrt{2\pi}c^2} \notag\\&\geq 
    \frac{\sigma}{\sqrt{2\pi}}\left[-\frac{2a}{c^2}-\frac{x^2}{a}+\frac{c^2x^4}{a^3}\right] +\frac{4\sigma}{\sqrt{2\pi}c^2}
    \notag\\&\geq \frac{\sigma}{\sqrt{2\pi}}\frac{-\frac{9}{4}a+4}{c^2} > 0.
\end{align}
Due to the fact that Eq.~(\ref{eq:pvpac1}): $\left. \frac{\partial V}{\partial a} \right|_{c=1} = 0$ and $ \frac{\partial V}{\partial a\partial c} > 0$, for any $c>1$, we have Eq.~(\ref{eq:pvpa}): 
\begin{equation}
     \frac{\partial V}{\partial a} > 0.
\end{equation}
Substitute $c=1$ and $a=1$ into the Eq.~(\ref{eq:v}) separately, we have:
\begin{equation}
    V(x,1,c) = 0,\label{eq:va1}
\end{equation}
and
\begin{equation}
    V(x,a,1) = 0.\label{eq:vc1}
\end{equation}
Therefore, from Eq.~(\ref{eq:pvpc}): $\frac{\partial V}{\partial c} < 0$, Eq.~(\ref{eq:pvpa}): $\frac{\partial V}{\partial c} > 0$, Eq.~(\ref{eq:va1}): $V(x,1,c) = 0$, and Eq.~(\ref{eq:vc1}):$V(x,a,1) = 0$, for $0<a<1$,$c>1$, we can conclude:
\begin{equation}
    V(x,a,c) < 0,\label{eq:v0}
\end{equation}
for any $x>0$.
Therefore, 
\begin{equation}
    (\MakeUppercase{\romannumeral 2}) = V(b,a,c) < 0,
    \label{eq:II}
\end{equation}
always stands up for any $0<a<1$, $b>0$, and $c>1$.

Combining Eq.~(\ref{eq:I}) and Eq.~(\ref{eq:II}), we conclude that Eq.~(\ref{eq:outin}):
\begin{equation}
    \mathbb{E}_{\text{out}}[\bar{z_i}-z_i]-\mathbb{E}_{\text{in}}[\bar{z_i}-z_i] < 0.
\end{equation}

\subsection{Related Concepts}
\label{A4.1.1}
\subsubsection{Loss Barrier}

A loss barrier \cite{frankle2020linear} between two models refers to a scenario where the optimization landscape, as defined by the loss function, presents a considerable and challenging obstacle for transitioning from one model to another. When attempting to move from one model to another, the goal is to adjust the parameters in a way that leads to improved performance on a specific task. However, if there exists a loss barrier between the two models, this means that making parameter updates to transition from the first model to the second model might involve encountering a region in the parameter space where the loss function increases significantly.

\subsubsection{Git Rebasin}

The postulate of Git Rebasin \cite{ainsworth2023git} methodology posits that a substantial subset of Stochastic Gradient Descent (SGD) solutions attained through the customary training regimen of neural networks belongs to a discernible collection, wherein the constituent elements can be systematically permuted. This permutation yields a configuration wherein no loss barrier in loss landscape exist along the trajectory of linear interpolation connecting any two permuted constituents.

\subsubsection{Ensemble Learning}

Ensemble methods have been a longstanding approach to enhancing model performance by combining the predictions of multiple models. Bagging~\cite{breiman1996bagging} and Boosting~\cite{freund1996experiments} are classic ensemble techniques that aim to mitigate overfitting and bias in predictions. Deep ensembles, as introduced by \cite{lakshminarayanan2017simple}, leverage multiple neural networks with different initializations to capture model uncertainty and encourage diverse parameter sampling. SSLC~\cite{vyas2018out} and kFolden ~\cite{li2021kfolden} combined the traditional idea based on the difference in data samples with the data leave-out approach to construct deep ensembles. All of these ensembles have been effective in the OOD detection problem.

\subsection{Sinkhorn Distance and Coupling Matrix}
\label{A3}

\subsubsection{Computation of Coupling Matrix}
The feature representations generated by the two models are considered as distributions $g_{H_1}(D)$ and $g_{H_2}'(D)$. The coupling matrix 
$\mathbf{P}_{H_1,H_2}$ represents how much probability mass from one point in support of $g_{H_1}(D)$ is assigned to a point in support of $g_{H_2}'(D)$. 
For a coupling matrix $\mathbf{P}_{H_1,H_2}$, all its columns must add to a vector containing the probability masses for $g_{H_1}(D)$, denoted as $\mathbf{v}_{H_1}$
, and all its rows must add to a vector with the probability masses for $g_{H_2}'(D)$,  denoted as $\mathbf{v}_{H_2}$.

The calculation of the total overhead of this assignment also relies on another cost matrix $\mathbf{C}$, which describes the cost of assigning a point in support of $g_{H_1}(D)$ to every single point in support of $g_{H_2}'(D)$. We usually use the $L^p$ distance~(p=2 in this work) between the feature representations of the samples to obtain the cost matrix.

The ultimate goal is to optimize:
\begin{equation*}
\begin{aligned}
        &\min_{\mathbf{P}_{H_1,H_2}}\langle \mathbf{C},\mathbf{P}_{H_1,H_2}\rangle\\
        \text{subject~to}~ & \mathbf{P}_{H_1,H_2}~\mathbf{1} = \mathbf{v}_{H_1},\\
        & \mathbf{1}^T~\mathbf{P}_{H_1,H_2} = \mathbf{v}_{H_2}^T
\end{aligned}
\end{equation*}
The minimum is known as Wasserstein distance. However, it is hard to compute because of computational complexity and non-convexity. Sinkhorn distance~\cite{cuturi2013sinkhorn} is an approximation to Wasserstein distance, which introduces an entropic regularization to make the problem convex, and therefore, can be solved iteratively.
Thus the problem is transformed into:
\begin{equation*}
\begin{aligned}
        &\min_{\mathbf{P}_{H_1,H_2}}\langle \mathbf{C},\mathbf{P}_{H_1,H_2}\rangle + \epsilon\sum_{ij}{\mathbf{P}_{H_1,H_2}}_{ij}\log{\mathbf{P}_{H_1,H_2}}_{ij}\\
        subject~to~ & \mathbf{P}_{H_1,H_2}~\mathbf{1} = \mathbf{v}_{H_1},\\
        & \mathbf{1}^T~\mathbf{P}_{H_1,H_2} = \mathbf{v}_{H_2}^T
\end{aligned}
\end{equation*}

Increasing $\epsilon$ will make the coupling matrix smoother.
The solution to this optimization problem can be written as $\mathbf{P}_{H_1,H_2} = diag(u)\mathbf{K}diag(v)$, where $\mathbf{K} = e^{-\lambda\mathbf{C}} $ is a kernel matrix.
$u$ and $v$ are updated with the iteration:
\begin{equation*}
\begin{aligned}
        & u^{(k+1)} = \frac{\mathbf{v}_{H_1}}{\mathbf{K}v^{(k)}}\\
        & v^{(k+1)} = \frac{\mathbf{v}_{H_2}}{\mathbf{K}^Tu^{(k+1)}}
\end{aligned}
\end{equation*}

After multiple iterations (100 in this work), the final coupling matrix $\mathbf{P}_{H_1,H_2}$ is obtained.
\subsubsection{Strength of Regularization in Sinkhorn Distance}
The strength of regularization ($\epsilon$) is set according to analysis in \cite{cuturi2013sinkhorn} that requires taking $\epsilon^{-1}$ in order of $\log n/p$, where $n$ is the number of samples and $p$ is the tolerance of approximation. In this paper, $n = 512$ and $p = 0.0001$, thus, we should have $\epsilon^{-1} > 15.44$. In experiments of this paper, we set $\epsilon$ to 0.05 to satisfy the above constraint. Under this constraint, the approximation precision of the Sinkhorn distance is sufficient to support our observation of self-coupling. 

\subsection{Self-Coupling Index Table}
\label{A4}
\begin{table*}[h]
  \caption{Self-Coupling Index for models trained under different initialization and training strategies. The model structure is ResNet-18. The dataset is CIFAR10.}
  \label{sample-table}
  \centering
  \begin{tabular}{lcccccc}
    \toprule
   &SupCE   & SupCon             & SimCLR  & MoCo & RotNet & JigClu\\
    \midrule
    SupCE & \textbf{0.861}  & 0.203   &0.091  & 0.094 & 0.107 & 0.163 \\
    SupCon~\cite{khosla2020supervised}  & 0.214 & \textbf{0.877}  & 0.107 & 0.193 & 0.207 & 0.187   \\
    SimCLR~\cite{chen2020simple} & 0.094   & 0.089  & \textbf{0.834} & 0.367 & 0.147 & 0.139 \\
   MoCo~\cite{he2020momentum} & 0.097 & 0.209  & 0.339 & \textbf{0.913} & 0.329 & 0.096  \\
   RotNet~\cite{gidaris2018unsupervised} & 0.119 & 0.207  & 0.165 & 0.311 & \textbf{0.987} & 0.165  \\
   JigClu~\cite{chen2021jigsaw} & 0.170 & 0.175  & 0.126 & 0.101 & 0.160 & \textbf{0.915}  \\
    \bottomrule
  \end{tabular}
  \label{tab:sci-res18-cifar}
\end{table*}
\begin{table*}[h]
  \caption{Self-Coupling Index for models trained under different initialization and training strategies. The model structure is ResNet-50. The dataset is CIFAR10.}
  \label{sample-table}
  \centering
  \begin{tabular}{lcccccc}
    \toprule
   &SupCE   & SupCon             & SimCLR  & MoCo & RotNet & JigClu\\
    \midrule
    SupCE & \textbf{0.841}  & 0.197   &0.106  & 0.076 & 0.097 & 0.168 \\
    SupCon~\cite{khosla2020supervised}  & 0.200 & \textbf{0.854}  & 0.098 & 0.199 & 0.163 & 0.157   \\
    SimCLR~\cite{chen2020simple} & 0.099   & 0.069  & \textbf{0.812} & 0.316 & 0.112 & 0.119 \\
   MoCo~\cite{he2020momentum} & 0.079 & 0.185  & 0.319 & \textbf{0.891} & 0.289 & 0.086  \\
   RotNet~\cite{gidaris2018unsupervised} & 0.113 & 0.187  & 0.155 & 0.293 & \textbf{0.965} & 0.143  \\
   JigClu~\cite{chen2021jigsaw} & 0.164 & 0.149  & 0.117 & 0.081 & 0.153 & \textbf{0.865}  \\
    \bottomrule
  \end{tabular}
  \label{tab:sci-res50-cifar}
\end{table*}
\begin{table*}[h]
  \caption{Self-Coupling Index for models trained under different initialization and training strategies. The model structure is ResNet-50. The training dataset is ImageNet-1K.}
  \label{sample-table}
  \centering
  \begin{tabular}{lcccccc}
    \toprule
   &SupCE   & SupCon             & SimCLR  & MoCo & RotNet & JigClu\\
    \midrule
    SupCE & \textbf{0.721}  & 0.021   &0.047  & 0.036 & 0.031 & 0.078 \\
   SupCon~\cite{khosla2020supervised}  & 0.069 & \textbf{0.734}  & 0.036 & 0.089 & 0.063 & 0.059   \\
    SimCLR~\cite{chen2020simple} & 0.039   & 0.028  & \textbf{0.757} & 0.196 & 0.062 & 0.043 \\
   MoCo~\cite{he2020momentum} & 0.028 & 0.067  & 0.183 & \textbf{0.699} & 0.186 & 0.016  \\
   RotNet~\cite{gidaris2018unsupervised} & 0.043 & 0.063  & 0.053 & 0.164 & \textbf{0.765} & 0.127  \\
   JigClu~\cite{chen2021jigsaw} & 0.049 & 0.057  & 0.034 & 0.011 & 0.053 & \textbf{0.711}  \\
    \bottomrule
  \end{tabular}
  \label{tab:sci-res50-im}
\end{table*}
\begin{table*}[h]
  \caption{Self-Coupling Index for models trained under different initialization and training strategies. The model structure is ViT-B. The dataset is CIFAR10.}
  \label{sample-table}
  \centering
  \begin{tabular}{lcccc}
    \toprule
   &SupCE   & MoCo v3 & MAE & DINO        \\
    \midrule
    SupCE & \textbf{0.769}  & 0.068   &0.046  & 0.036 \\
    MoCo v3~\cite{chen2021empirical}  & 0.111 & \textbf{0.862}  & 0.031 & 0.159    \\
    MAE~\cite{he2022masked} & 0.049   & 0.016  & \textbf{0.887} & 0.036  \\
   DINO~\cite{caron2021emerging} & 0.031 & 0.185  & 0.035 & \textbf{0.791} \\
   
    \bottomrule
  \end{tabular}
  \label{tab:sci-vit}
\end{table*}
In Table~\ref{tab:sci-res18-cifar} and~\ref{tab:sci-res50-cifar}, we report the Self-Coupling Index between some models with representative training criterion trained on CIFAR10 dataset with ResNet-18 and ResNet-50~\cite{he2016deep}, respectively. There is a large Self-Coupling Index between the same training method and a smaller Self-Coupling Index between models with different training methods.

In Table~\ref{tab:sci-res50-im} and~\ref{tab:sci-vit}, we report the Self-Coupling Index between some models with representative training criterion trained on ImageNet dataset with ResNet-50~\cite{he2016deep} and ViT-B~\cite{dosovitskiy2020image}, respectively. Due to the large amount of Imagenet data, we take a balanced part of the dataset to calculate the self-coupling index.

\subsection{Scoring Methods}
\label{A4.1}
(1) MSP~\cite{hendrycks2017baseline}: using maximum softmax probability as detection scoring metric, and ID data point will have higher softmax probability. 

\begin{equation}
s_{\text{MSP}}(\mathbf{x})=\max_k \operatorname{Softmax}(W g_H(\mathbf{x})+\mathbf{b})_k,
\end{equation}
where $W$ and $b$ is parameter for output layer.

(2) Mahalanobis distance~\cite{lee2018simple}: Mahalanobis distance takes into account the covariance of the class distribution. The data point has a high Mahalanobis distance from the distribution is considered OOD. 

\begin{equation}
s_{\text {Mahal.}}(\mathbf{x}):=\max _k-\left(g_H(\mathbf{x})-\hat{\mu}_k\right)^{\top} \hat{\Sigma}\left(g_H(\mathbf{x})-\hat{\mu}_k\right)
\end{equation}
where $\hat{\mu}_k$ and $\hat{\Sigma}$, are the estimated feature vector mean and covariance for classes.

(3)Energy~\cite{liu2020energy}: Energy score uses the energy-based model to score the feature representation. 

\begin{equation}
s_{\text {Energy }}(\mathbf{x})=-\log \sum_{k=1}^K \exp \left(\mathbf{w}_i^{\top} g_H(\mathbf{x})+b_i\right)
\end{equation}

(4)KNN~\cite{sun2022out}: it is a non-parameter approach that computes the k-nearest neighbor distance between test input embedding and training set embeddings, using a threshold to determine OOD.
\begin{equation}
s_{\text{KNN}}\left(g_H(\mathbf{x})^* ; k\right)=\mathbf{1}\left\{-r_k\left(g_H(\mathbf{x})^*\right) \geq \lambda\right\},
\end{equation}
where where $r_k\left(\mathbf{z}^*\right)=\left\|\mathbf{z}^*-\mathbf{z}_{(k)}\right\|_2$ is the distance to the k-th nearest neighbor

\subsection{CIFAR10 Benchmark on ResNet-50}
\label{A5}

\begin{table*}[h]
  \caption{\textbf{Results on CIFAR10 Benchmark with ResNet-50.} Comparison with competitive OOD detection methods. All results are in percentages.}
  \resizebox{\textwidth}{!}{
  \centering    
  \tabcolsep 1pt
  \begin{tabular}{lcccccccccccc}
    \toprule

     \multicolumn{13}{c}{\textbf{OOD Dataset}}\\
   & \multicolumn{2}{c}{\textbf{SVHN}}   & \multicolumn{2}{c}{\textbf{LSUN}}              & \multicolumn{2}{c}{\textbf{iSUN}}      & \multicolumn{2}{c}{\textbf{Texture}}     & \multicolumn{2}{c}{\textbf{Places365}}     & \multicolumn{2}{c}{\textbf{Average}}  \\
    \cmidrule(r){2-3}  \cmidrule(r){4-5}  \cmidrule(r){6-7}  \cmidrule(r){8-9}  \cmidrule(r){10-11}  \cmidrule(r){12-13}
      Methods   & FPR95     & AUROC & FPR95     & AUROC & FPR95     & AUROC       & FPR95     & AUROC & FPR95     & AUROC & FPR95     & AUROC \\
    \midrule
    ODIN~\cite{liang2018enhancing}  & 18.34 &95.68 &7.10 &98.67 &28.17 &94.69 &53.26 &87.42& 59.07 &88.57 &33.19 &93.01\\ 
    SSD+~\cite{sehwag2021ssd} &1.07 &99.80 &5.25 &98.87& 29.75 &95.64 &9.99 &97.97 &25.37 &94.85 &14.29 &97.42 \\
    CSI~\cite{tack2020csi} &39.12 &93.96 &4.88 &99.00 &10.41 &98.02 &29.31 &94.41 &36.23 &93.13 &23.99 &95.70\\
    MSP~\cite{hendrycks2017baseline} &53.36 &92.31 &48.46 &93.60 &53.86 &92.07 &60.34 &89.01 &57.32 &89.16 &54.67 &91.23\\
    Mahalanobis~\cite{lee2018simple} &9.56 &97.36 &59.86 &78.37 & 16.02 &96.41 &19.31 &94.30 &69.67 &73.56 &34.88 &88.00\\
    Energy~\cite{liu2020energy} &48.06 &92.60 &11.85 &97.62 &26.52 &95.55 &43.32 &93.33 & 41.37 &91.35 &34.22 &94.09\\
    KNN~\cite{sun2022out} &23.19 &95.89 &23.29 &96.18 &21.55 &96.11 &23.90 &95.12 &43.97 &91.23 &27.18 & 94.90\\
    KNN+\cite{sun2022out} &2.65 &99.43 &1.98 &99.38 &19.36 &96.71 &7.11 &98.75 &19.12 &96.31 &10.04 &98.11\\
    \midrule
    \textbf{MC Ens.+MSP} & 42.37 & 91.78 & 43.45 & 92.11 & 43.36 & 92.32 & 43.86 &92.44 & 49.13 & 90.00 & 44.43 & 91.73\\
    \textbf{MC Ens.+Mahala.} &3.64 & 98.99 & 41.32 & 94.35 & 18.55 & 94.97 & 12.27& 94.81 & 24.68 & 91.02 & 20.09 & 94.82\\
    \textbf{MC Ens.+Energy} &34.94 & 92.59 & 6.01& 99.06& 17.99 &96.62 &23.98 &91.91 & 31.02 & 92.98 & 22.79 & 94.63\\
    \textbf{MC Ens.+KNN} & \textbf{0.89} & \textbf{99.81} & \textbf{0.24} & \textbf{99.91} & \textbf{6.96} & \textbf{98.23} & \textbf{5.13} & \textbf{98.86} & \textbf{12.39} & \textbf{97.75} & \textbf{5.12} & \textbf{98.91}\\
    \bottomrule
  \end{tabular}}
  \label{tab:cifar-res50}

\end{table*}

As shown in Table~\ref{tab:cifar-res50}, we trained 3 different ResNet-50s for the MC Ensemble, the training configuration is the same as ResNet-18 except the batch size is set to 256. The result is consistent with Table~\ref{tab:cifar}. We notice that the MC Ensemble+MSP result is lower than the one in ResNet-18, we argue that this is because larger models tend to give a more over-confident prediction.
\subsection{Comparison with Naive Ensemble on Mahalanobis Distance and Energy}
\label{A6}

\begin{table*}[h]
  \caption{\textbf{Comparison with naive ensemble.} Models in naive ensemble are trained from different weight initialization. All results are in percentages.}

  \centering    
  \tabcolsep 1pt
  \begin{tabular}{lcccccccccccc}
    \toprule

     \multicolumn{13}{c}{\textbf{OOD Dataset}}\\
   & \multicolumn{2}{c}{\textbf{SVHN}}   & \multicolumn{2}{c}{\textbf{LSUN}}              & \multicolumn{2}{c}{\textbf{iSUN}}      & \multicolumn{2}{c}{\textbf{Texture}}     & \multicolumn{2}{c}{\textbf{Places365}}     & \multicolumn{2}{c}{\textbf{Average}}  \\
    \cmidrule(r){2-3}  \cmidrule(r){4-5}  \cmidrule(r){6-7}  \cmidrule(r){8-9}  \cmidrule(r){10-11}  \cmidrule(r){12-13}
      Methods   & FPR95     & AUROC & FPR95     & AUROC & FPR95     & AUROC       & FPR95     & AUROC & FPR95     & AUROC & FPR95     & AUROC \\
    \midrule
    \multicolumn{13}{c}{\textbf{Mahalanobis distance}}\\
    3$\times$SupCE  & 8.71 &97.96 &59.29 &87.96 & 31.55 &93.26 &21.57 &93.72 &73.90 &71.14 & 39.00 & 88.80\\ 
    \textbf{MC Ens.} &2.09 & 99.48 & 43.35 & 93.79 & 21.59&94.77&14.31& 94.68 & 27.68 & 89.88 & 21.80 & 94.52\\
    \multicolumn{13}{c}{\textbf{Energy score}}\\
    3$\times$SupCE &51.29 &91.83 &11.15 &97.79  &25.88 &95.21 &53.55 &89.91 & 40.93 &92.01 &36.56 & 93.35\\
    \textbf{MC Ens.} &34.99 & 92.58 & 6.05& 99.05& 17.96 &96.59 &23.97 &91.92 & 33.02 & 92.37 & 23.20 & 94.50\\
    \bottomrule
  \end{tabular}
  \label{tab:cifar2MDEN}

\end{table*}

The comparison of the naive ensemble with 3 cross-entropy trained ResNet-18 and MC Ensemble on Mahalanobis distance~\cite{lee2018simple} and Energy socre~\cite{liu2020energy} is shown in Table~\ref{tab:cifar2MDEN}. MC Ensemble consistently outperforms naive ensemble on these scoring metrics.

\subsection{ImageNet Benchmark on ViT-B}
\label{A7}
\begin{table*}[h]
  \caption{\textbf{Results on ImageNet Benchmark with ViT-B~\cite{dosovitskiy2020image}}.  All results are in percentages. Scoring metric is KNN.}

  \centering    
  \tabcolsep 1pt
  \begin{tabular}{lcccccccccc}
    \toprule

     \multicolumn{11}{c}{\textbf{OOD Dataset}}\\
   & \multicolumn{2}{c}{\textbf{iNaturalist}}   & \multicolumn{2}{c}{\textbf{SUN}}              & \multicolumn{2}{c}{\textbf{Places}}      & \multicolumn{2}{c}{\textbf{Textures}}   & \multicolumn{2}{c}{\textbf{Average}}  \\
    \cmidrule(r){2-3}  \cmidrule(r){4-5}  \cmidrule(r){6-7}  \cmidrule(r){8-9}  \cmidrule(r){10-11}  
      Methods   & FPR95     & AUROC & FPR95     & AUROC & FPR95     & AUROC       & FPR95     & AUROC & FPR95     & AUROC  \\
    \midrule
    
    ViT-B(SupCE) & 8.41 & 97.23 & 49.98 & 86.32 & 37.98 &91.37 & 56.24 &85.71 & 38.15 & 90.16\\
    \midrule

    \textbf{MC ViT Ens.} & \textbf{7.99} & \textbf{97.73} & \textbf{43.72} & \textbf{90.69} &\textbf{35.89} &\textbf{91.02} & \textbf{34.65} & \textbf{91.77}  & \textbf{30.56} & \textbf{92.80}\\
    \bottomrule
  \end{tabular}

  \label{tab:imagenet-vit}

\end{table*}

We fine-tune 3 different ViT-B models which are trained with cross-entropy, MoCo v3~\cite{chen2021empirical}, and Masked-Autoencoder~\cite{he2022masked} to build a MC ViT Ensemble. The weights are imported from their original repositories.
As shown in Table~\ref{tab:imagenet-vit}, MC ViT Ensemble still consistently outperforms vanilla ViT.

\subsection{ImageNet Benchmark on ResNet-50}
\label{A8}
As shown in Table~\ref{tab:imagenet-res50}, we report the ImageNet Benchmark results on ResNet-50. We notice Mahalanobis distance scoring metric almost crashes on ImageNet benchmark, this may be because the Mahalanobis distance leverages the class center information, and in Imagenet Benchmark, there are 1000 class centers, which is hard to determine which class a sample belongs to regardless of its distribution. MC Ensemble is not able to improve this.
\begin{table*}[h]
  \caption{\textbf{Results on ImageNet Benchmark.}  All results are in percentages. Some of the baseline results are from~\cite{sun2022out}.}

  \centering    
  \tabcolsep 1pt
  \begin{tabular}{lcccccccccc}
    \toprule

     \multicolumn{11}{c}{\textbf{OOD Dataset}}\\
   & \multicolumn{2}{c}{\textbf{iNaturalist}}   & \multicolumn{2}{c}{\textbf{SUN}}              & \multicolumn{2}{c}{\textbf{Places}}      & \multicolumn{2}{c}{\textbf{Textures}}   & \multicolumn{2}{c}{\textbf{Average}}  \\
    \cmidrule(r){2-3}  \cmidrule(r){4-5}  \cmidrule(r){6-7}  \cmidrule(r){8-9}  \cmidrule(r){10-11}  
      Methods   & FPR95     & AUROC & FPR95     & AUROC & FPR95     & AUROC       & FPR95     & AUROC & FPR95     & AUROC  \\
    \midrule
    ODIN & 47.66 & 89.66 & 60.15 & 84.59 & 50.23 & 85.62 &  67.89 &81.78 & 56.48 & 85.41\\
    SSD+ & 57.16 & 87.77 & 78.23 & 73.10 & 36.37 & 88.52 & 81.19 &70.97 &63.24 &80.09\\
    MSP  & 54.99&  87.74 & 70.83 & 80.86 & 68.00 & 79.61 & 73.99 & 79.76  & 66.95 & 81.99\\
    Mahalanobis & 97.00 & 52.65 & 98.50 &42.41 & 55.80 &85.01 & 98.40 & 41.79 & 87.43 &55.47\\
    Energy & 55.72 & 89.95 & 59.26 & 85.89 & 53.72 & 85.99 & 64.92 & 82.86 & 58.41 & 86.17\\
    KNN & 59.00 & 86.47 & 68.82 & 80.72 & 11.77 &97.07 & 76.28 &75.76 & 53.97 &85.01\\
    KNN+ & 30.18 & 94.89 & 48.99 & 88.63 &15.55 & 95.40 & 59.15 & 84.71 & 38.47 & 90.91\\
    \midrule
    \textbf{MC Ens.+Mahala.} & 98.00 & 52.15 & 100.00 & 50.95 & 97.65 & 51.24 & 100.00 & 48.97 & 98.91 & 50.83\\
     \textbf{MC Ens.+Energy} & 38.45 & 92.75 & 43.98 & 90.33 & 37.69 & 91.88 & 48.96 & 87.91 & 42.27 & 90.72\\
    \textbf{MC Ens.+KNN} & \textbf{15.39} & \textbf{96.78} & \textbf{42.97} & \textbf{90.35} &\textbf{54.89} &\textbf{87.34} & \textbf{9.54} & \textbf{97.77}  & \textbf{30.69} & \textbf{93.06}\\
    \bottomrule
  \end{tabular}

  \label{tab:imagenet-res50}

\end{table*}
%

\subsection{Limitations}
\label{A10}
This paper proposes to aggregate multiple models trained with different tasks to form a multi-comprehension ensemble for better OOD detection performance. The limitation of this paper can be that: (1) In current work, the task/criteria pool we have explored cannot be described as large, and this makes it possible for us to miss the opportunity to find a more powerful MC ensemble. As more and more training task/criteria being proposed, the task/criteria pool needs further study. (2) The computation overhead of MC Ensemble is still $M \times$ compared to a single standalone model with the same backbone. In the case of constrained computation resources, the latency may increase.



\end{document}